%%%%%%%%%%%%%%%%%%%%%%%%%%%%%%%%%%%%%%%%%%%%%%%%%%%%%%%%%%%%%%%%%%%%%%%%

%%% LaTeX Template for ECAI Papers 
%%% Prepared by Ulle Endriss (version 1.0 of 2023-12-10)

%%% To be used with the ECAI class file ecai.cls.
%%% You also will need a bibliography file (such as mybibfile.bib).

%%%%%%%%%%%%%%%%%%%%%%%%%%%%%%%%%%%%%%%%%%%%%%%%%%%%%%%%%%%%%%%%%%%%%%%%

%%% Start your document with the \documentclass{} command.
%%% Use the first variant for the camera-ready paper.
%%% Use the second variant for submission (for double-blind reviewing).

\documentclass{ecai} 
% \documentclass[doubleblind]{ecai} 

%%%%%%%%%%%%%%%%%%%%%%%%%%%%%%%%%%%%%%%%%%%%%%%%%%%%%%%%%%%%%%%%%%%%%%%%

%%% Load any packages you require here. 

\usepackage{latexsym}
\usepackage{amssymb}
\usepackage{amsmath}
\usepackage{amsthm}
\usepackage{booktabs}
\usepackage{enumitem}
\usepackage{graphicx}
\usepackage{color}
\usepackage{hyperref}
\usepackage[capitalise]{cleveref}

\usepackage{mathtools}
\usepackage{dsfont} % for the indicator function
\usepackage{bm}
\usepackage{subcaption} % for subfigures
\usepackage{algorithm}
\usepackage{algpseudocode}
% \linenumbers

%%%%%%%%%%%%%%%%%%%%%%%%%%%%%%%%%%%%%%%%%%%%%%%%%%%%%%%%%%%%%%%%%%%%%%%%

%%% Define any theorem-like environments you require here.

\newtheorem{theorem}{Theorem}
\newtheorem{lemma}[theorem]{Lemma}

\newtheorem{proposition}[theorem]{Proposition}

%%%%%%%%%%%%%%%%%%%%%%%%%%%%%%%%%%%%%%%%%%%%%%%%%%%%%%%%%%%%%%%%%%%%%%%%

%%% Define any new commands you require here.

\newcommand{\BibTeX}{B\kern-.05em{\sc i\kern-.025em b}\kern-.08em\TeX}

\DeclareMathOperator{\E}{\mathbb{E}}

\newif\ifforarxiv
% Uncomment to enable arxiv version otherwise it'll be for the camera-ready.
\forarxivtrue

% Define reusable macros
\newcommand{\appendixtext}{%
  \ifforarxiv
    Appendix%
  \else
    supplementary material~\cite{ourpaper-arxiv}%
  \fi
}

\newcommand{\appendixtextnocite}{%
  \ifforarxiv
    Appendix%
  \else
    supplementary material%
  \fi
}

%%%%%%%%%%%%%%%%%%%%%%%%%%%%%%%%%%%%%%%%%%%%%%%%%%%%%%%%%%%%%%%%%%%%%%%%

\begin{document}

%%%%%%%%%%%%%%%%%%%%%%%%%%%%%%%%%%%%%%%%%%%%%%%%%%%%%%%%%%%%%%%%%%%%%%%%

\begin{frontmatter}

%%% Use this command to specify your submission number.
%%% In doubleblind mode, it will be printed on the first page.

\paperid{0986} 

%%% Use this command to specify the title of your paper.

\title{Aligning the Evaluation of Probabilistic Predictions with Downstream Value}

%%% Use this combinations of commands to specify all authors of your 
%%% paper. Use \fnms{} and \snm{} to indicate everyone's first names 
%%% and surname. This will help the publisher with indexing the 
%%% proceedings. Please use a reasonable approximation in case your 
%%% name does not neatly split into "first names" and "surname".
%%% Specifying your ORCID digital identifier is optional. 
%%% Use the \thanks{} command to indicate one or more corresponding 
%%% authors and their email address(es). If so desired, you can specify
%%% author contributions using the \footnote{} command.

% with orcids
% \author[A]{\fnms{Novin}~\snm{Shahroudi}\orcid{0000-0002-3121-7603}\thanks{Corresponding Author. Email:  novin.shahroudi@ut.ee.}}
% \author[A]{\fnms{Viacheslav}~\snm{Komisarenko}\orcid{0000-0001-9210-1260}}
% \author[A]{\fnms{Meelis}~\snm{Kull}\orcid{0000-0001-9257-595X}}
%
% without orcids
\author[A]{\fnms{Novin}~\snm{Shahroudi}\orcid{}\thanks{Corresponding Author. Email:  novin.shahroudi@ut.ee.}}
\author[A]{\fnms{Viacheslav}~\snm{Komisarenko}\orcid{}}
\author[A]{\fnms{Meelis}~\snm{Kull}\orcid{}}

\address[A]{Institute of Computer Science, University of Tartu}

%%% Use this environment to include an abstract of your paper.

\begin{abstract}
Every prediction is ultimately used in a downstream task. Consequently, evaluating prediction quality is more meaningful when considered in the context of its downstream use. Metrics based solely on predictive performance often diverge from measures of real-world downstream impact.
Existing approaches incorporate the downstream view by relying on multiple task-specific metrics, which can be burdensome to analyze, or by formulating cost-sensitive evaluations that require an explicit cost structure, typically assumed to be known a priori. We frame this mismatch as an \emph{evaluation alignment problem} and propose a data-driven method to learn a proxy evaluation function aligned with the downstream evaluation. %Building on the theory of proper scoring rules, we explore transformations of scoring rules via their Bayes risk functionals, ensuring the preservation of propriety. 
Building on the theory of proper scoring rules, we explore transformations of scoring rules that ensure the preservation of propriety.
Our approach leverages weighted scoring rules parametrized by a neural network, where weighting is learned to align with the performance in the downstream task. This enables fast and scalable evaluation cycles across tasks where the weighting is complex or unknown a priori. We showcase our framework through synthetic and real-data experiments for regression tasks, demonstrating its potential to bridge the gap between predictive evaluation and downstream utility in modular prediction systems.
\end{abstract}

\end{frontmatter}

%%%%%%%%%%%%%%%%%%%%%%%%%%%%%%%%%%%%%%%%%%%%%%%%%%%%%%%%%%%%%%%%%%%%%%%%

\section{Introduction}
Many real-world applications, especially safety-critical or risk-sensitive ones, rely on probabilistic predictions to manage uncertainty in decision-making. For example, retailers use demand forecasts to optimize inventory and prevent stock-related losses~\cite{halperin2022reinforcement,huber2019data}. Although predictive fidelity, which refers to the accuracy with which predictions capture the true distribution, is crucial, the practical value of predictions ultimately depends on their downstream usefulness. %Therefore, a systematic evaluation procedure is necessary to ensure successful deployment.
Therefore, a systematic procedure is needed to evaluate how well probabilistic predictions support downstream decision-making.

It is well recognized across machine learning, forecasting, and decision science that a model’s predictive quality as measured by standard evaluation metrics does not always translate into decision-making value~\cite{dumas2022deep, hong2020energy, 8464297}. In other words, a model that performs well on a metric such as MAE may not yield the best outcomes when its predictions inform downstream decisions \cite{kappen2018evaluating}. 
This limitation also applies to proper scoring rules~\cite{gneiting2007strictly}, which are considered the gold standard for evaluating probabilistic predictions because they encourage Bayes-optimal predictions. 
Likewise, many downstream tasks assess predictions through expected utility or Bayes risk, typically under the assumption that the predictions are close to optimal.
Ideally, if predictions were optimal, evaluations via proper scoring rules would match the downstream value. However, in practice, suboptimal predictions lead to differences between the two evaluations, reflecting the distinct priorities and asymmetries inherent in each.
Closely related terms include loss–metric mismatch and suboptimality gap~\cite{huang2019addressing, elmachtoub2022smart}. %or generally recognized as a mismatch between predictive accuracy and downstream value. 
In our work, we refer to it as \emph{evaluation misalignment}.

As a toy example, consider three predictions, denoted by $A$, $B$, and $C$, where $A$ is the optimal prediction while $B$ and $C$ are two different suboptimal predictions. Suppose that when evaluated based on predictive fidelity, their ranking is $R_{\text{predictive}}(A,B,C) = 1,2,3$, with $A$ performing the best. However, when evaluated based on downstream value, the ranking may change to $R_{\text{downstream}}(A,B,C) = 1,3,2$. The optimal prediction $A$ consistently ranks first because it is uniquely optimal, but the suboptimal predictions $B$ and $C$ can swap ranks depending on the specifics of the downstream task. %This shift arises because downstream evaluation may emphasize different aspects of the prediction quality than predictive fidelity alone — for example, asymmetric losses or task-specific priorities. 
A simulated illustration of this phenomenon is provided in~\cref{fig:illustration_firstpage}. \\
\begin{figure}[t]
    \centering
    \includegraphics[width=0.45\linewidth]{./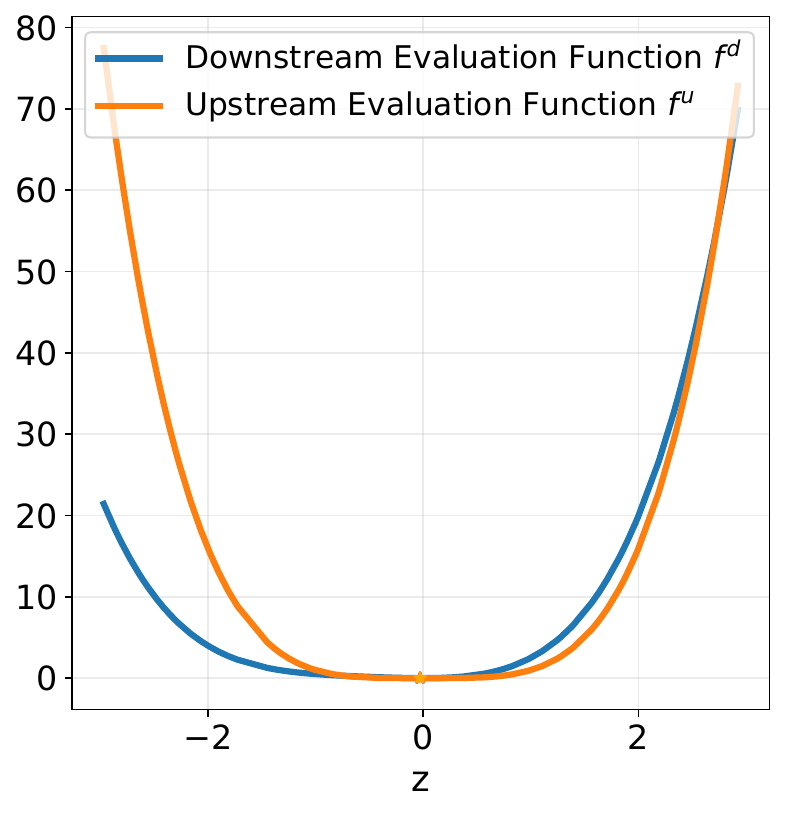}
    \includegraphics[width=0.505\linewidth]{./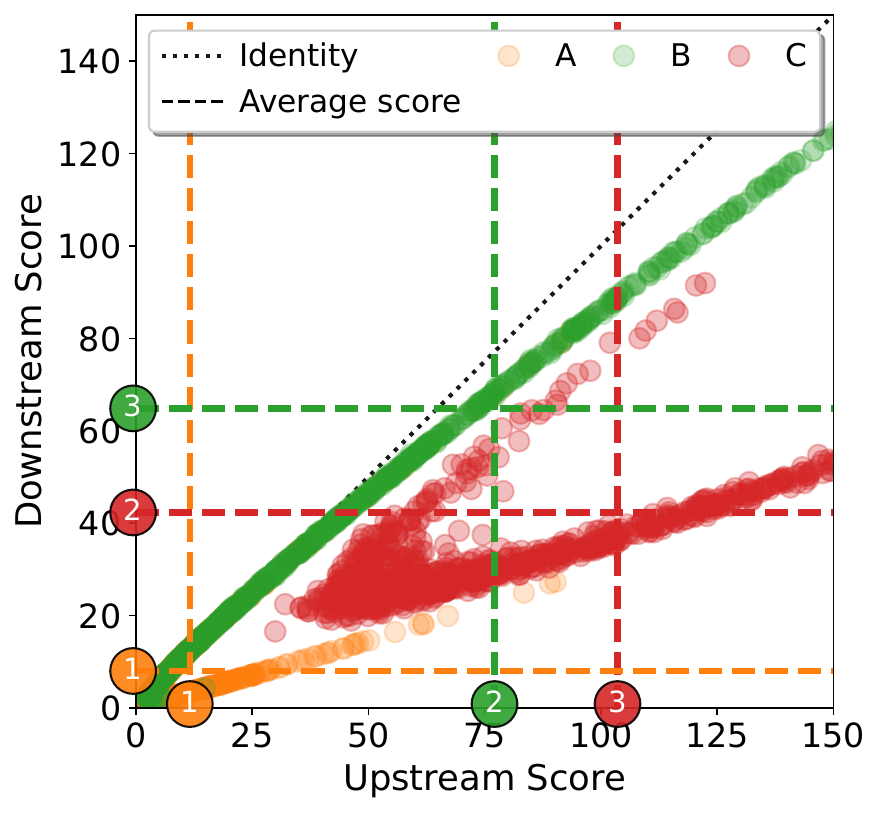}
    \caption{Toy example illustrating the mismatch between predictive quality evaluation (upstream) and evaluation downstream. We evaluate three predictions: $A\!:\!\{\hat{y}^{(j)}_i\}_{j=1}^M\!\sim\!\mathcal{N}(2,1)$, $B\!:\!\{\hat{y}^{(j)}_i\}_{j=1}^M\!\sim\!\mathcal{N}(0,1)$, and $C\!:\!\{\hat{y}^{(j)}_i\}_{j=1}^M\!\sim\!\mathcal{N}(3,2)$, against observations  $y_i\!\sim\!\mathcal{N}(2,1)$ for $i\!\in\{\!1,\dots,\!N\}$ where $N$ is number of instances in the dataset and $M$ samples per predictive distribution. \textbf{Left}: upstream and downstream evaluation functions $f^u(z)\!=\! z^4$ and $f^d(z)\!=\!0.5 (z^4\!+\!2z^3\!+\!2z^2)$, with $z_i\!=\!|y_i\!-\!\hat{y}_i^{(j)}|$. \textbf{Right}: alignment curve; scatter plot of mean scores across $M$ samples, highlighting scale and ranking differences. For visualization, $N\!=\!100$,  $M\!=\!1000$ (typically, $M\!\ll\!N$).} \vspace{1.5em}
    \label{fig:illustration_firstpage}
\end{figure} 

\noindent
% key questions
When evaluating predictions, two key questions naturally arise:
\begin{enumerate}[nosep]
    \item What is the impact of prediction errors on the downstream task?
    \item Which prediction, among several models, offers the best performance for the downstream task?
\end{enumerate}
\vspace{0.5em}

% related work
A direct way to answer these questions would be to evaluate each prediction by running the downstream task. However, executing the downstream task can often be costly and time-consuming. In addition, knowing the impact alone does not directly offer actionable insights about how to improve the predictions. 

Existing studies approach the misalignment from different angles.
One common strategy is to evaluate predictions using a large set of metrics, where each metric captures an aspect relevant to the downstream task, including, when possible, the downstream value itself. However, the multiplicity of metrics makes it difficult to analyze results systematically. Alternatively, end-to-end modeling or decision-focused learning seeks to jointly optimize the prediction and the downstream task, thus aligning the evaluation with use~\cite{mandi2024decision, elmachtoub2022smart, donti2017task, wilder2019melding}. However, this approach is not always desirable. In many cases, such as weather forecasting, we cannot anticipate all potential downstream applications when developing the model. %Moreover, this evaluation overhead could be a limiting factor to answer the second key question.
Thus, tying predictive modeling to the downstream can limit the scalability of the evaluations, i.e. in order to answer the second key question.

Other approaches address the discrepancy more directly. Cost-sensitive evaluation incorporates domain knowledge about the cost of different types of errors in the assessment. In general, cost-sensitive evaluation has been studied much more for classification~\cite{elkan2001foundations, zadrozny2003cost,ling2008cost,domingos1999metacost, Komisarenko2025} than for regression~\cite{hern2014probabilistic, zhao2011extended}. In many of the existing works, for both classification and regression, the cost structure is explicitly defined.  Weighted scoring rules (WSRs) are a type of cost-sensitive evaluation that offers a principled way to weigh a proper scoring rule in order to account for the downstream impact while preserving the properness~\cite{matheson1976scoring}. In existing studies, the weighting function often has a simple form and it is assumed to be provided by the domain expert~\cite{gneiting2011comparing,holzmann2017focusing,allen2023evaluating}. %Although all of these methods have their merits, they often suffer from limited scalability and do not fully answer the key questions posed above.
Although cost-sensitive approaches answer both key questions, the quality of their answer depends on domain knowledge about the cost-structure or weight function.

% our solution
We answer to both key questions by formulating the misalignment problem between predictive performance and downstream value as an \emph{evaluation alignment problem}, aiming to minimize the discrepancy between the two assessments. %Theoretically, a downstream evaluation function can be viewed as a scoring rule~\cite{dawid2007geometry}. Building on this view, we show that it is possible to explore the space of scoring rules by applying certain transformations — similar to those used in weighted scoring rules (WSRs) — to a given source scoring rule to obtain a target scoring rule aligned with downstream needs. Within this framework, we focus on evaluating continuous predictive distributions by parametrizing a WSR through a neural network. Our method applies to univariate regression tasks and learns a weighting function in a data-driven fashion, which is particularly valuable when the appropriate weighting is complex or unknown. By learning an evaluation function that acts as a proxy for the downstream evaluation, our approach enables the assessment of the impact of predictions without having to execute the downstream task repeatedly. This supports faster development cycles and provides interpretability from the downstream perspective.
Despite existing works linking scoring rules to decision losses~\cite{dawid2007geometry}, to the best of our knowledge, there has not been any work that would address the evaluation alignment using proper scoring rules. We show how WSRs provide the foundation for the problem and propose a method to learn a weighting using an alignment process between proper scoring rules and downstream decision loss with minimal assumptions about the weight function. %Unlike decision-focused learning frameworks that address alignment between predictions and optimal downstream decisions, which in turn becomes evaluation alignment, we take the opposite approach of aligning evaluations, which in turn enables using the aligned evaluation as a loss for the predictive modeling. However, our study is only focuses on aligning evaluations and not the predictions. 
Unlike decision-focused learning frameworks that align predictions with optimal downstream decisions, resulting in evaluation alignment as a byproduct, we take the reverse approach: We align evaluations, which can then serve as loss functions for training predictive models. Our study focuses exclusively on aligning evaluations, but prediction optimization is a potential use case of our method. \\

\vspace{-0.5em}
\noindent
Our contributions are as follows: 
\begin{enumerate}[nosep]
    \item We first introduce the mismatch between predictive quality and downstream value of probabilistic predictions as an evaluation alignment problem. 
    \item We then characterize the class of transformations that preserve the propriety of scoring rules, thus providing a principled foundation for aligning evaluations through transformed scoring rules.
    \item And we propose a neural-network-based alignment method that parameterizes a WSR to learn a proxy evaluation function aligned with the downstream value.
    \item Finally, we showcase our approach empirically on synthetic and real-data experiments for univariate regression tasks.% as a large representative of machine learning tasks.%,
\end{enumerate}
\vspace{0.5em}

The remainder of the paper is organized as follows. In \cref{sec:preliminary}, we introduce key definitions and background. \Cref{sec:alignment} formalizes the evaluation alignment problem, and \cref{sec:method} describes our proposed methodology. Experimental results are presented in \cref{sec:experiments}, followed by concluding remarks in \cref{sec:conclusion}.

%%%%%%%%%%%%%%%%%%%%%%%%%%%%%%%%%%%%%%%%%%%%%%%%%%%%%%%%%%%%%%%%%%%%%%%%

\section{Preliminaries}\label{sec:preliminary}
This section introduces components necessary for our alignment framework, which centers on decision-making under uncertainty informed by probabilistic predictions. % We distinguish between two core elements: the \textit{upstream task}, responsible for issuing probabilistic predictions, and the \textit{downstream task}, which uses these predictions to make optimal decisions in the presence of uncertainty. We also define the evaluation metrics used to assess both stages—through proper scoring rules and decision-aware performance measures—to support a principled comparison between predictive quality and downstream utility. 
Let \( Y \) denote a random variable representing the target quantity of interest, with true distribution \( P_Y \). We observe realizations \( y \sim P_Y \), such as a number of umbrellas sold in a store on a rainy day, and construct a predictive distribution based on the observations. %\( \hat{P}_Y \) that approximates \( P_Y \) conditioned on input features or covariates (conditioning omitted here for simplicity). We then use \( \hat{P}_Y \) as input to a decision-making process. The resulting predictions are evaluated both in isolation and in terms of their impact on downstream decisions.

\textit{\textbf{Upstream task}} refers to a probabilistic modeling task that aims to capture the uncertainty in predicting the target variable $Y$. Instead of issuing point estimates, the model outputs a predictive cumulative distribution function (CDF) $\hat{P}_Y$ to approximate the true target distribution $P_Y$.
The output of the upstream task is ultimately used as an input to a \textit{downstream task}. Thus, we refer to it as "upstream" to juxtapose it with "downstream." %The dataset consists of $N$ instances. For each instance, $i$, the prediction and observed outcome are denoted by $\hat{P}_{Y_i}$ and $y_i$, respectively. 

\textit{\textbf{Sample-based prediction}} In this work, we consider predictions to be represented in the form of samples from the predictive distribution. That is, instead of working with an explicit parametric form for \( \hat{P}_Y \), we approximate it via \( M \) samples:
\[
\hat{\mathbf{y}} = \left\{ \hat{y}^{(j)} \right\}_{j=1}^M \sim \hat{P}_Y.
\]
This sample-based representation is common in probabilistic models such as Bayesian neural networks and variational inference, which rely on Monte Carlo sampling where direct access to the density or CDF of \( \hat{P}_Y \) may not be available~\cite{murphy2012machine}. % Ensemble methods may also yield sample-based approximations, though they differ conceptually and computationally.
\textit{Proper scoring rules} provide a principled way to assess the quality of probabilistic predictions by incentivizing accurate and calibrated uncertainty, thus, we adopt them as a standard way to evaluate the upstream task. We next define what a scoring rule is and what it means to be proper. %The quality of the probabilistic predictions from the upstream task is evaluated using a \textit{proper scoring rule}, which incentivizes accurate and calibrated uncertainty estimates. 

\textit{\textbf{Scoring rule}} is a function \( S(\hat{P}_Y, y) \in \mathbb{R} \) that evaluates the quality of a probabilistic prediction \( \hat{P}_Y \) given the realized outcome \( y \). A scoring rule is called \textit{proper} if it encourages truthful reporting of uncertainty, i.e., if the expected score is minimized when the predictive distribution matches the true distribution:
\[
\mathbb{E}_{y \sim P_Y}[S(\hat{P}_Y, y)] \geq \mathbb{E}_{y \sim P_Y}[S(P_Y, y)] \quad \text{for all } P_Y,\hat{P}_Y.
\]
It is \textit{strictly proper} if the inequality becomes an equality if and only if \( \hat{P}_Y = P_Y \). That is, the expected score
\[
S(\hat{P}_Y, P_Y) := \mathbb{E}_{y \sim P_Y}[S(\hat{P}_Y, y)]
\]
satisfies \( S(\hat{P}_Y, P_Y) \geq S(P_Y, P_Y) \), with equality if and only if \( \hat{P}_Y = P_Y \). See \cite{gneiting2007strictly} for a comprehensive treatment of proper scoring rules. \\

When the predictive distribution \( \hat{P}_Y \) is available only through samples \( \hat{\mathbf{y}} = \{ \hat{y}^{(j)} \}_{j=1}^M \), proper scoring rules are evaluated using sample-based approximations. % For instance, the continuous ranked probability score (CRPS) has an unbiased sample-based estimator given by~\cite{matheson1976scoring, gneiting2005crps}:
Some scoring rules such as CRPS have an unbiased closed-form estimator~\cite{matheson1976scoring, gneiting2005crps} that can be readily applied to the samples. We denote the scoring rule operating on samples by \( S(\hat{\mathbf{y}}, y) \), where \( \hat{\mathbf{y}}\) is a vector of $M$ prediction samples drawn i.i.d. from \( \hat{P}_Y \), and \( y \sim P_Y \) is the observed outcome~\footnote{We slightly abuse notation by using the same symbol \( S \) for both the theoretical scoring rule defined on distributions and its sample-based estimator; the distinction will be clear from the context.}. Other scoring rules such as \emph{log score} may not admit closed-form sample-based estimators and require kernel density estimation or alternative approximations \cite{tagasovska2019single}. 

We consider a dataset consisting of \( N \) instances, indexed by \( i = 1, \dots, N \), where each instance corresponds to a realization \( y_i \sim P_{Y_i} \) of the target variable and the associated input features (omitted here for brevity). The upstream model issues a probabilistic prediction \( \hat{P}_{Y_i} \) for each instance, representing an approximation of the conditional distribution of \( Y_i \) given the features. These predictive distributions serve two roles: they are evaluated directly via proper scoring rules (upstream evaluation), and they are used as input to a downstream decision-making process that induces a task-specific cost or profit (downstream evaluation). Each instance, therefore, gives rise to both an upstream and downstream score, which we seek to align, that we will discuss in~\cref{sec:alignment}. Therefore, the average upstream score over the dataset is calculated as 
\begin{equation*}
    \frac{1}{N} \sum_{i=1}^N S(\mathbf{\hat{y}}_i, y_i),
\end{equation*}
where $\hat{\mathbf{y}}_i = \{\hat{y}^{(j)}_i\}_{j=1}^M$ for each instance $i$.

\textit{\textbf{Downstream task}} refers to a mathematical optimization problem under uncertainty, where a decision-maker uses the predictive distribution \( \hat{P}_Y \), along with other problem-specific parameters and constraints, to compute optimal decisions. In our work, we bring examples about and experiment on \emph{stochastic programming}~\cite{shapiro2021lectures} as an instance of decision-making under uncertainty. The downstream objective is to find a decision \( \hat{a} \in \mathcal{A} \) that minimizes the expected cost with respect to the predictive distribution \( \hat{P}_Y \):
\begin{align}
    \label{eq:downstream_opt_action}
    \hat{a} &= \mathop{\arg\max}\limits_{a\in\mathcal{A}} \mathbb{E}_{\hat{y}\sim \hat{P}_Y}[\pi(a, \hat{y})],
\end{align}
where \( \pi(a, \hat{y}) \) denotes the task-specific profit\footnote{Alternatively, \(\pi(a,\hat{y})\) could denote a loss instead of profit, in which case the above objective would be a minimization instead of maximization.} obtained when taking decision \( a \) under realization \( \hat{y} \). To evaluate the performance of the decision \( \hat{a} \), we compute the expected profit under the true distribution \( P_Y \):
\begin{align}
    \label{eq:downstream_score}
    \mathbb{E}_{y\sim P_Y}[\pi(\hat{a}, y)].
\end{align}

\Cref{eq:downstream_opt_action} defines the optimization step based on the model’s belief (\( \hat{P}_Y \)), while~\cref{eq:downstream_score} reflects the actual performance under the true data-generating process (\( P_Y \)). 
\\

The \textit{\textbf{upstream score}} \( s^u_i \) measures the quality of the prediction \( \hat{\mathbf{y}}_i \) against observation $y_i$ for the \( i^{\text{th}} \) instance using a proper scoring rule:
\[
s^u_i = S(\hat{\mathbf{y}}_i, y_i).
\]

The \textit{\textbf{downstream score}} \( s^d_i \) captures the quality of the decision \( \hat{a}_i \), which is obtained by solving a stochastic optimization problem using the predictive distribution \( \hat{P}_{Y_i} \). It is defined as:
\[
s^d_i = \pi(\hat{a}_i, y_i),
\]
where \( \pi(\cdot, \cdot) \) denotes the task-specific profit function, and \( \hat{a}_i \) is the optimal decision computed under \( \hat{P}_{Y_i} \).

The corresponding average (dataset-level) scores are estimated as follows:
\begin{align*}
    \bar{s}^u &= \frac{1}{N} \sum_{i=1}^N s^u_i= \frac{1}{N} \sum_{i=1}^N S(\hat{\mathbf{y}}_i, y_i), \\
    \bar{s}^d &= \frac{1}{N} \sum_{i=1}^N s^d_i = \frac{1}{N} \sum_{i=1}^N \pi(\hat{a}_i, y_i).
\end{align*}

\textit{\textbf{Continuous Ranked Probability Score (CRPS)}} is a common proper scoring rule to assess univariate continuous predictions, defined as follows:
\begin{equation}\label{eq:crps}
    \begin{aligned}
    \text{CRPS}(P, y) &= \int_{\mathbb{R}} (P(z) - \mathds{1}\{y \leq z\})^2 \,dz \\
    &= \E_P |X - y| - \frac{1}{2} \E_P |X - X'|,
    \end{aligned}
\end{equation}
where $y \in \mathbb{R}$ is the observation, and $X, X' \sim P$ are independently drawn from the predictive distribution \(\hat{P}_Y\) with the cumulative distribution function $P$.
The second line of Eq~\ref{eq:crps} corresponds to the kernel form~\cite{allen2023evaluating, gneiting2007strictly}. The sample-based approximation is immediately applicable to the kernel form and can be estimated as 
\begin{equation}\label{eq:crps_sample_based}
    \begin{aligned}
    \widehat{CRPS}(\mathbf{\hat{y}}, y) &= \sum_{j=1}^M |\hat{y}^{(j)} - y| - \frac{1}{2M^2} \sum_{j=1}^M\sum_{k=1}^M |\hat{y}^{(j)} - \hat{y}^{(k)}|,
    \end{aligned}
\end{equation}
and the dataset-wide average score can be calculated as 
\begin{equation*}
    \overline{CRPS} = \frac{1}{N} \sum_{i=1}^N \widehat{CRPS}(\mathbf{\hat{y}}_i, y_i)
\end{equation*}

Weighted scoring rules (WSRs) extend ordinary proper scores by attaching greater importance to regions of the outcome space that matter most to the user.  A convenient construction is \emph{threshold weighting} \citep{gneiting2011comparing}, which
integrates the underlying score against a non‑negative weight
\(w\) (or its strictly increasing \emph{chaining} primitive \(v\)).  For the
continuous ranked probability score (CRPS), this yields
\begin{equation}\label{eq:twcrps}
\begin{aligned}
    \mathrm{twCRPS}(P,y;w)&=
    \int_{\mathbb R}\bigl(P(z)-\mathbf 1\{y\le z\}\bigr)^{2}w(z)\,dz \\
    =\, &
    \mathbb E_{P}\!\bigl|v(X)-v(y)\bigr|
      -\tfrac12\mathbb E_{P}\!\bigl|v(X)-v(X')\bigr|,
\end{aligned}
\end{equation}
which is proper, and it is strictly proper whenever \(w\) is strictly positive.
A Monte‑Carlo (ensemble) estimator is obtained by replacing the
expectations with averages over predictive samples
\(\hat y^{(1)},\dots,\hat y^{(M)}\).

Recent work by Allen \emph{et al.} \citep{allen2023evaluating,allen2024weighted}
systematises threshold weighting and provides guidance on choosing \(w\) or \(v\) to emphasise, for example, exceedances above a policy‑relevant threshold.
We refer the reader to those papers for detailed derivations, recipes, and
additional examples.

%%%%%%%%%%%%%%%%%%%%%%%%%%%%%%%%%%%%%%%%%%%%%%%%%%%%%%%%%%%%%%%%%%%%%%%%

\section{Evaluation Alignment}\label{sec:alignment}

In many predictive settings, the objective used to train a model (the upstream score) differs from the metric that drives real-world decisions (the downstream score), leading to a mismatch of upstream and downstream regressor rankings. In this section, we formalise \textit{evaluation alignment} - the process of transforming the upstream score so that it agrees with the downstream criterion - and show why it is both necessary and achievable. Building on proper scoring rule theory, we frame alignment as minimizing the discrepancy between two scoring rules via monotone re-parameterizations. We then prove that for some families of losses, including CRPS and its threshold-weighted variants, this procedure yields perfect alignment.

In most downstream evaluations, one must first make a decision under uncertainty, selecting an action without yet knowing the true outcome. By Bayesian decision theory, a rational approach is to choose an action that minimises the forecast-conditional expectation of the downstream loss. % (equivalently, maximises the expected downstream score). 
More formally, assume that for each predictive distribution \(P\) the Bayes act
\[
a^{\star}(P)=\arg\min_{a\in\mathcal A}\mathbb E_{Y\sim P}\,\pi(a,Y)
\]
exists and is unique (e.g.\ when \(a\mapsto\pi(a,y)\) is strictly convex).  Then the induced score
\[
s^d(P,Y)\coloneqq\pi\bigl(a^{\star}(P),Y\bigr)
\]
is (strictly) proper with respect to \(P\) and \(Y\) \cite{savage1971elicitation,gneiting2007strictly}.
Selecting an action that minimises expected utility is the backbone of many decision‑theoretic scoring constructions \citep{winkler1969scoring}. The properness follows almost immediately from the optimisation principle itself, yet this point is typically noted only in passing in the literature, for instance by \citep{gneiting2007strictly}.

Because our focus is on aligning evaluation metrics, we leave the exact action space unspecified.  We assume only that, for every predictive distribution \(P\), a Bayes-optimal action \(a^{\star}(P)\) exists and (conceptually) will be taken.  We therefore fold this decision step into the downstream loss and write
\[
s^{\mathrm d}(P,Y)\;=\;\pi\bigl(a^{\star}(P),Y\bigr),
\]
so that \(s^{\mathrm d}\) reports the realised loss \emph{after} optimal action. 

The idea behind \textit{\textbf{Evaluation Alignment}} is to adjust the upstream evaluation such that it produces the same evaluation as the downstream. We refer to such an evaluation as an \emph{aligned evaluation} and refer to the process of obtaining an aligned evaluation as \textit{alignment}. 

To achieve such alignment, we introduce transformations of the upstream loss, namely, by re-mapping its arguments (the forecast and the realised outcome) and/or its value.  
Guided by the Proposition below,
we restrict the transformation to a strictly increasing bijection \(\nu\) applied identically to both the forecast and the outcome, together with an injective transformation \(h\) on the loss itself - two ingredients that \emph{preserve}
(strict) propriety:
\[
    s_{\rm aligned}(P,y)\;=\;h\!\bigl(s^u\bigl(P\circ \nu^{-1},\,\nu(y)\bigr)\bigr).
\]
Here the function $P\circ \nu^{-1}$ is chosen such as it is the function that reparametrizes $P$ to act in the space transformed by $\nu$, i.e. $(P\circ\nu^{-1})(\nu(y))=P(y)$ for any $y$.
By using transforms that maintain properness, we preserve the upstream score’s incentive towards truthful forecasts.

\begin{proposition}
Let \(L(P,y)\) be a (strictly) proper scoring rule on an outcome space \(\mathcal Y\subseteq\mathbb R\).  Define
\[
S(P,y)\;=\;h\!\bigl(L\bigl(P\circ \nu^{-1},\nu(y)\bigr)\bigr),
\]
where \(\nu:\mathcal Y\to\mathcal Y\) and \(h:\mathbb R\to\mathbb R\) are continuous functions.  Then \(S\) is (strictly) proper if and only if:
\begin{enumerate}
  \item \(\nu\) is a bijection on \(\mathcal Y\), and
 \item \(h(s)=a\,s+b\) is an affine function with \(a>0\).
\end{enumerate}
No other choice of \((h, \nu)\) preserves propriety.
\end{proposition}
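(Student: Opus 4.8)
The plan is to prove the two implications separately. The ``if'' direction is a direct change of variables: writing $Q$ for the true law of $Y$ and using that $\nu$ is an (increasing) bijection, $\nu(Y)$ has cdf $Q\circ\nu^{-1}$, so in the expected-score notation of \cref{sec:preliminary}
\[
\mathbb{E}_{Y\sim Q}\!\bigl[L\bigl(P\circ\nu^{-1},\nu(Y)\bigr)\bigr]
  =\mathbb{E}_{Z\sim Q\circ\nu^{-1}}\!\bigl[L\bigl(P\circ\nu^{-1},Z\bigr)\bigr]
  =L\bigl(P\circ\nu^{-1},\,Q\circ\nu^{-1}\bigr),
\]
and with $h(s)=as+b$, $a>0$, this gives $\mathbb{E}_{Y\sim Q}[S(P,Y)]=a\,L(P\circ\nu^{-1},Q\circ\nu^{-1})+b$; (strict) propriety of $L$, together with $a>0$ and bijectivity of $P\mapsto P\circ\nu^{-1}$ (so that $P\circ\nu^{-1}=Q\circ\nu^{-1}\iff P=Q$), then yields (strict) propriety of $S$. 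For ``only if'' I would use a reduction: the special case $h=\mathrm{id}$ of what was just proved says that if $\nu$ is a bijection then $\widetilde L:=L(\,\cdot\circ\nu^{-1},\nu(\cdot))$ is again (strictly) proper, so once $\nu$ has been shown to be a bijection the remaining task is to show that \emph{$h\circ\widetilde L$ (strictly) proper forces $h$ affine with positive slope}; I then drop the tilde and write $L$ for $\widetilde L$.

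\textbf{Step 1: $\nu$ must be a bijection.} If $\nu(y_1)=\nu(y_2)$ with $y_1\neq y_2$, then $S(\delta_{y_1},\cdot)$ and $S(\delta_{y_2},\cdot)$ are the same function of the outcome, so under true law $\delta_{y_1}$ the distinct forecasts $\delta_{y_1}$ and $\delta_{y_2}$ attain equal expected scores, contradicting strict propriety; hence $\nu$ is injective. Surjectivity is needed for the construction to be well defined on all of $\mathcal Y$: the relation $(P\circ\nu^{-1})(\nu(y))=P(y)$ only determines $P\circ\nu^{-1}$ on $\nu(\mathcal Y)$, and for this to be a predictive cdf on $\mathcal Y$ for every forecast $P$ one needs $\nu(\mathcal Y)=\mathcal Y$.

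\textbf{Step 2: $h$ must be affine with positive slope.} Restrict both the forecast and the outcome to a two-point set $\{a,b\}\subset\mathcal Y$ with $a<b$, and parametrise a forecast by the mass $p$ it puts on $a$; the restriction $\ell_{a,b}(p,y):=L\bigl(p\delta_a+(1-p)\delta_b,\,y\bigr)$ for $y\in\{a,b\}$ is a strictly proper scoring rule for a Bernoulli parameter (this is precisely the propriety inequality for $L$ read off for such two-point laws), and propriety of $h\circ L$ forces $p\mapsto q\,h(\ell_{a,b}(p,a))+(1-q)\,h(\ell_{a,b}(p,b))$ to be minimised at $p=q$ for every $q\in(0,1)$. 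Combining its first-order condition at $p=q$ with the first-order condition for $\ell_{a,b}$ itself (whose $p$-derivatives there are nonzero for a strictly proper $\ell_{a,b}$, as for CRPS) collapses the two to $h'\!\bigl(\ell_{a,b}(q,a)\bigr)=h'\!\bigl(\ell_{a,b}(q,b)\bigr)$. For the rules of interest this suffices: for CRPS one has $\ell_{a,b}(p,a)=(b-a)(1-p)^2$ and $\ell_{a,b}(p,b)=(b-a)p^2$, so letting $(a,b,q)$ range over all admissible triples the pair $\bigl(\ell_{a,b}(q,a),\ell_{a,b}(q,b)\bigr)$ sweeps out all of $(0,\infty)^2$, whence $h'$ is constant on $(0,\infty)$ and $h$ is affine on the range of $L$. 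Finally the slope $a$ cannot be $0$ (else $S$ is constant, hence not strictly proper) and cannot be negative (else $S=aL+b$ is maximised, not minimised, at the truth, violating the sign convention of propriety), so $a>0$. Combined with the ``if'' direction this gives the stated characterisation and shows that no other $(h,\nu)$ preserves propriety.

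\textbf{Expected obstacle.} The real content is Step 2. First, $h$ is only assumed continuous, so the first-order manipulation is not licensed a priori; the repair is to note that the two-point inequality for the true parameter $q=\tfrac12$ already reads $g(x+s)+g(x-s)>2g(x)$ with $g(t):=h(t^2)$, i.e.\ $g$ is strictly convex, hence locally Lipschitz and differentiable off a countable set — enough to run the derivative argument almost everywhere and then recover $g(t)=ct^2+d$ by integration and continuity. Second, and more substantively, the ``sweeping'' claim is transparent for CRPS but needs checking for a general (strictly) proper $L$ on an interval; concretely, for CRPS the two-point inequality is equivalent to
\[
y\,[\,g(x+s)-g(x)\,]\;+\;x\,[\,g(y-s)-g(y)\,]\;>\;0
\qquad\text{for all }x,y>0,\ s\in(-x,y)\setminus\{0\},
\]
and the crux is that this single inequality pins $g$ down to a strictly increasing quadratic; for other rules one replaces the CRPS-specific algebra by the Savage/McCarthy representation of $L$. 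I would also remark that the behaviour of $h$ outside the range of $L$ is immaterial, and that a continuous bijection of an interval is automatically strictly monotone, which matches the strictly increasing $\nu$ used in the method.
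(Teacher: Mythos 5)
Your ``if'' direction and your Step~1 are essentially the paper's argument (affine invariance of the argmin plus pushforward invariance, and a collision of point masses to rule out non-injective $\nu$), and your remarks on surjectivity and on continuous injections being monotone are fine. The genuine gap is in Step~2, and it is the step that carries all the weight: the proposition asserts necessity of affine $h$ for \emph{every} (strictly) proper $L$, but your argument establishes it only for CRPS. The two-point reduction gives, at best, the constraint $h'(\ell_{a,b}(q,a))=h'(\ell_{a,b}(q,b))$ along a one-parameter curve of pairs for each fixed $\{a,b\}$, and whether the union of these curves over $(a,b,q)$ ties $h'$ down on the whole (connected) range of $L$ depends entirely on the specific rule. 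For CRPS the factor $(b-a)$ makes the pairs sweep $(0,\infty)^2$, as you note; but if the binary restriction does not depend on the location pair (as for a log/Brier-type binary reduction, where the pairs trace the single curve $\{(-\log q,-\log(1-q))\}$), the condition only forces $h'$ to be symmetric under an involution, not constant. So the ``sweeping'' claim is not a routine verification you may defer --- it can genuinely fail for the method, and your fallback (``replace the CRPS-specific algebra by the Savage/McCarthy representation'') is not a detail but is precisely the missing proof. There is also the unaddressed issue that the first-order condition needs differentiability of $p\mapsto\ell_{a,b}(p,y)$, which a general strictly proper $L$ need not have; your convexity repair for $h$ is again CRPS-specific.

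For comparison, the paper's Part~B avoids the two-point reduction altogether: it works with the Savage representation $L(P,y)=\Phi(P)+D\Phi(P;\delta_y-P)$ for the full rule, computes the first variation of the Bayes risk of $h\circ L$ at $P=Q$ over all zero-mass directions, and uses positive definiteness of the entropy Hessian to reduce stationarity to the centering identity $\mathbb{E}_{W\sim Q^{\nu}}\bigl[h'\bigl(L(Q^{\nu},W)\bigr)(\delta_W-Q^{\nu})\bigr]=0$; the resulting zero-covariance statement against arbitrary test functions forces $h'\circ L(Q^{\nu},\cdot)$ to be a.s.\ constant, hence $h$ affine on the connected range of $L$, with $a>0$ by the sign argument you also give. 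If you want to keep your elementary route, you must either restrict the proposition to CRPS-type integral scores (where your sweeping and convexity arguments are complete) or actually carry out the Savage-representation argument for general $L$; as written, the general case is asserted rather than proved.
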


\begin{proof}[A complete proof is presented in the \appendixtext]

\end{proof}

Perfect alignment is not always attainable.
To construct an example demonstrating this, let us fix the predictive distribution \(P=\mathrm{Unif}(0,1)\).
Then the logarithmic score is identically zero,
\(S_{\log}(P,y)=-\log p(y)=0\) for all \(y\in(0,1)\).
By contrast, the CRPS for the same uniform \(P\) is  
\[
\begin{aligned}
S_{\mathrm{CRPS}}(P,y)
&= \int_{0}^{y} z^{2}\,dz \;+\;\int_{y}^{1}(1-z)^{2}\,dz
\;=\;\frac{y^{3}+(1-y)^{3}}{3}\\
&= y^{2}-y+\tfrac{1}{3},\qquad 0\le y\le 1
\end{aligned}
\]
which is a non-constant quadratic.  Applying any injective transform \(h\) to
\(S_{\log}\) still produces the constant \(h(0)\); hence no monotone \(\nu\)  and injective \(h\) can convert
\(S_{\log}\) into this varying CRPS.

However, for a particular family of proper scoring rules, it is possible to demonstrate perfect alignment. 

\begin{lemma}

Let 
\[
s^u(P,y)=\int_{-\infty}^{\infty}w^u(x)\;k\bigl(P(x),\mathbf1\{y\le x\}\bigr)\,dx,
\]
\[
s^d(P,y)=\int_{-\infty}^{\infty}w^d(x)\;k\bigl(P(x),\mathbf1\{y\le x\}\bigr)\,dx
\]
be two strictly proper integral scoring rules sharing the same strictly proper binary loss \(k\) with strictly positive continuous weight functions $w^u$, $w^d$ which both integrate to $1$, i.e. have unit total mass.  There exists a strictly increasing bijection \(\nu:\mathbb R\to\mathbb R\) and an affine function \(h:\mathbb R\to\mathbb R\) such that
\[
s^d(P,y)=h\!\bigl(s^u(P\circ \nu^{-1},\nu(y))\bigr)
\quad\forall\,P,y,
\]

\end{lemma}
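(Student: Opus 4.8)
The plan is to show that the change of variables $x \mapsto \nu(x)$ inside the integral defining $s^u$ transforms the weight $w^u$ into $w^d$, so that $s^u(P\circ\nu^{-1},\nu(y))$ becomes exactly $s^d(P,y)$ (making $h$ the identity, a special case of affine). The key observation is that for any $x$, by the defining property of the reparametrization, $(P\circ\nu^{-1})(\nu(x)) = P(x)$, and likewise $\mathbf{1}\{\nu(y)\le \nu(x)\} = \mathbf{1}\{y\le x\}$ because $\nu$ is strictly increasing. Hence the integrand of $s^u$ evaluated at the transformed forecast and outcome, as a function of a dummy variable $t$, is $w^u(t)\,k\bigl((P\circ\nu^{-1})(t),\mathbf{1}\{\nu(y)\le t\}\bigr)$, and substituting $t=\nu(x)$ gives $\int w^u(\nu(x))\,k\bigl(P(x),\mathbf{1}\{y\le x\}\bigr)\,\nu'(x)\,dx$. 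Comparing with $s^d$, we need $w^u(\nu(x))\,\nu'(x) = w^d(x)$ for all $x$.

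First I would introduce the cumulative weight functions $W^u(x) = \int_{-\infty}^x w^u(t)\,dt$ and $W^d(x) = \int_{-\infty}^x w^d(t)\,dt$. Since $w^u, w^d$ are strictly positive, continuous, and integrate to $1$, both $W^u$ and $W^d$ are continuous, strictly increasing bijections from $\mathbb{R}$ onto $(0,1)$. Therefore $\nu := (W^u)^{-1}\circ W^d$ is a well-defined strictly increasing bijection from $\mathbb{R}$ onto $\mathbb{R}$, and it is continuously differentiable with $\nu'(x) = w^d(x)/w^u(\nu(x)) > 0$ by the inverse function theorem (using $(W^u)^{-1}{}'(u) = 1/w^u((W^u)^{-1}(u))$). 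This $\nu$ satisfies exactly the identity $w^u(\nu(x))\,\nu'(x) = w^d(x)$ required above — equivalently, it is the unique increasing map pushing the density $w^d$ forward to $w^u$, i.e. the monotone transport map between the two "weight distributions."

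Next I would carry out the substitution rigorously: starting from $s^u(P\circ\nu^{-1},\nu(y)) = \int_{\mathbb R} w^u(t)\,k\bigl((P\circ\nu^{-1})(t),\mathbf{1}\{\nu(y)\le t\}\bigr)\,dt$, apply the change of variables $t = \nu(x)$, $dt = \nu'(x)\,dx$, note that $\nu$ maps $\mathbb R$ onto $\mathbb R$ so the limits are preserved, use $(P\circ\nu^{-1})(\nu(x)) = P(x)$ and $\mathbf{1}\{\nu(y)\le\nu(x)\} = \mathbf{1}\{y\le x\}$ (monotonicity of $\nu$), and collapse $w^u(\nu(x))\nu'(x)$ to $w^d(x)$. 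This yields $s^u(P\circ\nu^{-1},\nu(y)) = \int_{\mathbb R} w^d(x)\,k\bigl(P(x),\mathbf{1}\{y\le x\}\bigr)\,dx = s^d(P,y)$, so $h$ can be taken to be the identity map, which is affine with positive leading coefficient, consistent with the Proposition.

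The main obstacle — and the place where care is genuinely needed — is justifying the change of variables and the differentiability of $\nu$ under only the stated hypotheses (strict positivity, continuity, unit mass). Strict positivity is what guarantees $\nu'$ exists and is finite and positive everywhere; continuity gives $\nu \in C^1$; unit mass plus strict positivity is what makes $W^u, W^d$ onto $(0,1)$ and hence $\nu$ onto all of $\mathbb{R}$ (so no boundary terms appear and $\nu$ is a genuine bijection of $\mathbb{R}$). I would also remark that the integrability needed to apply the substitution theorem follows because $k$ is a fixed bounded (or at least locally integrable) binary loss and $w^u$ integrates to $1$; a brief sentence dispatches this. Finally, I would note that the freedom in $h$ (allowing a genuine affine $h(s)=as+b$ rather than just the identity) would let one relax the unit-mass assumption to merely "equal total mass" or even "finite total mass," but since the statement fixes both masses to $1$, the identity suffices.
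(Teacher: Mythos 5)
Your proposal is correct and follows essentially the same route as the paper's proof: you construct $\nu=(W^u)^{-1}\circ W^d$ from the cumulative weight functions, verify $w^u(\nu(x))\,\nu'(x)=w^d(x)$, perform the change of variables, and take $h$ to be the identity. The extra care you give to the regularity of $\nu$ and to the range of $W^u,W^d$ being $(0,1)$ is a welcome refinement but does not change the argument.
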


\begin{proof}[Proof sketch]
Define the cumulative weight functions
\[
W^u(t)=\int_{-\infty}^t w^u(x)\,dx,
\quad
W^d(t)=\int_{-\infty}^t w^d(x)\,dx.
\]
Because \(w^u\) and \(w^d\) are positive and integrable, each \(W^\bullet\) is a continuous, strictly increasing bijection from \(\mathbb R\) onto \([0,1]\).  Consequently, its inverse \((W^u)^{-1}:[0,1]\to\mathbb R\) exists, and the map
\[
\nu(t) \;=\; (W^u)^{-1}\bigl(W^d(t)\bigr)
\]
is itself a continuous, strictly increasing bijection on \(\mathbb R\).

By a change of variables \(x=\nu(t)\) one shows
\[
s^u\bigl(P\circ \nu^{-1},\nu(y)\bigr)
=\int_{-\infty}^{\infty}w^u\bigl(\nu(t)\bigr)\,\nu'(t)\;k\bigl(P(t),\mathbf1\{y\le t\}\bigr)\,dt.
\]
Let \(h(s)=s\), then, given that the identity  
\(W^{u}(\nu(t)) = W^{d}(t)\) implies, by differentiation,
$
w^{u}\!\bigl(\nu(t)\bigr)\,\nu'(t)=w^{d}(t)\quad\forall t\in\mathbb R .
$
Then the expression is the following:
\[
\begin{aligned}
h\bigl(s^{u}(P\circ \nu^{-1},\nu(y))\bigr)
  = \int_{-\infty}^{\infty}\! w^{u}\!\bigl(\nu(t)\bigr)\,\nu'(t)\,
       k\!\bigl(P(t),\mathbf 1\{y\le t\}\bigr)\,dt \\[2pt] = \int_{-\infty}^{\infty}
   w^{d}(t)\,k\!\bigl(P(t),\mathbf 1\{y\le t\}\bigr)\,dt
   \;=\; s^{d}(P,y),
\end{aligned}\]
which means we reconstructed the desired loss $s^d$.
\end{proof}

The considered family of losses is sufficiently rich as it includes all binary scoring rules as building blocks.
A CRPS is a special case of this family, with $k(x, y)=(x-y)^2$. Its transformation to threshold-weighted CRPS using monotonic $\nu$ was demonstrated in \cite{allen2024weighted}. 

Alignment is generally non‑unique.  For instance, when CRPS aligns with
itself, the identity pair $(h,\nu)$ is only one
solution; the whole family
\(\nu_{a,c}(y)=a\,y+c,\;h_{a,c}(s)=s/a\;(a>0,c\in\mathbb R)\)
also leaves the score unchanged:
\(h_{a,c}\!\bigl(s^{u}\circ \nu_{a,c}^{-1}\bigr)=s^{u}\).

The evaluation alignment could be formulated as an optimization problem that tries to minimize distance between upstream and downstream scores. As demonstrated above, for certain losses the alignment can be perfect, and hence, the distance would become zero.
Thus, we frame the alignment problem as a learning task to find transformations $h$ and $\nu$ that adjust the upstream evaluation and yield an aligned evaluation. 

Therefore, the alignment objective is 
\begin{equation}
     \mathop{\arg\min}\limits_{h\in\mathcal{H}, \nu\in\mathcal{V}} \E_i \delta(\hat{s}^d_i, s^d_i),
\end{equation}
where \(\delta\) denotes a distance metric (e.g., mean-squared error).  
The transformed score \(\hat{s}^d\) is regarded as an improvement over the
original upstream score \(s^{u}\) whenever
\begin{equation}
    \E_i \delta(\hat{s}^d_i, s^d_i) \leq \E_i \delta(s^u_i, s^d_i),
\end{equation}
with equality indicating that $s^{u}$ already achieves the minimal
expected distance attainable within the chosen transformation classes.
Intuitively, we seek two \emph{global} transformations\footnote{%
'Global’ means the same mappings are applied to every instance $i$; the
functions are not constant in the mathematical sense $f(x)=c$, but
instance--independent once learned.}
\(h\in\mathcal{H}\) and \(\nu\in\mathcal{V}\) that minimise the average
discrepancy between the transformed upstream score
\(\hat{s}^{d}=h\!\bigl(s^{u}\circ v^{-1}\bigr)\) and the true downstream score
\(s^{d}\).

%%%%%%%%%%%%%%%%%%%%%%%%%%%%%%%%%%%%%%%%%%%%%%%%%%%%%%%%%%%%%%%%%%%%%%%%

\section{Alignment Model}\label{sec:method}

In this section, we present a learning-based approach to evaluation alignment, where we train a neural network to approximate an aligned score that better reflects downstream utility. 

We use a neural network $f$ with parameters $\bm{\theta}$ to parameterize the transformations $\nu$ and $h$ of a scoring rule $S$ as depicted in the diagram in Fig.~\ref{fig:alignment_pipeline} and refer to it as the \emph{alignment model}. The scoring rule $S$ functions as an operator within the architecture and, in this context, can be treated as a fixed design choice or hyperparameter.
The neural network takes the samples $\hat{\mathbf{y}}$ from the predictive CDF $\hat{P}_Y$ and observation $y$ as input and estimates an aligned score, 
\begin{equation} \label{eq:parameterized_tw_scoringrule}
    \hat{s}^d = f_{S,\bm{\theta}}(\hat{\mathbf{y}}, y).
\end{equation}

\noindent
Since the downstream score $s^d$ serves as the target, the alignment task can be naturally framed as a regression problem. The objective of the neural network is thus to minimize the discrepancy between its predicted score and the true downstream score:
\begin{equation}\label{eq:alignment_loss}
    \mathop{\arg\min}\limits_{\bm{\theta}} |f_{S, \bm{\theta}}(\hat{\mathbf{y}}, y) - s^d|^2,
\end{equation}
with $\bm{\theta}=(\theta_1, \theta_2)$, where $\theta_1$ and $\theta_2$ correspond to the parameters of the input and output transformations, respectively. We refer to Eq.~\ref{eq:alignment_loss} as \emph{alignment loss}.

\begin{figure}[ht]
    \centering
    \includegraphics[width=0.95\linewidth]{./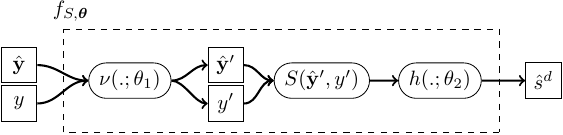}
    \vspace{1em}
    \caption{Dataflow diagram for the alignment model. It takes predictions $\hat{\mathbf{y}}$, observations $y$, and it outputs the estimated downstream score $\hat{s}^d$. It performs two transformations on the $S$. First, $\nu$ that applies a monotonic mapping on the inputs of $S$, and second, $h$ that performs a linear mapping on the output of $S$.}\vspace{2em}
    \label{fig:alignment_pipeline}
\end{figure}

The alignment model is trained in a post-hoc manner using a validation (or hold-out) set from the predictive task, which we refer to as the \emph{alignment set}, as detailed in Algorithm~\ref{alg:alignment_train}. % Although we used a neural network, in principle, any other regression model and optimizer could be used instead.

\renewcommand{\algorithmicrequire}{\textbf{Input:}}
\renewcommand{\algorithmicreturn}{\textbf{Output:}}

\begin{figure*}[t]
\centering
    \begin{minipage}[t]{0.48\textwidth}
        \begin{algorithm}[H]
            \caption{Alignment Model Training}
            \label{alg:alignment_train}
            \begin{algorithmic}[1]
                \Require Validation set $\mathcal D_{\rm val}\!=\!\{(\hat{\mathbf{y}}_i,y_i,s^d_i)\}_{i=1}^{N_{val}}$, model $f_{S,\theta}$, alignment loss $\ell$, optimizer (e.g.\ SGD)
                \State Split $\mathcal D_{\rm val}\!\to\!\mathsf D_{\rm train},\mathsf D_{\rm val}$
                \State $\displaystyle\theta^* \leftarrow \arg\min_\theta \E_{(\hat{\mathbf{y}},y,s)\in\mathsf D_{\rm train}}\bigl[\ell\bigl(f_{S,\theta}(\hat{\mathbf{y}}_i,y),\,s\bigr)\bigr]$
                \State (Optionally: monitor $\E_{\mathsf D_{\rm val}}[\ell(f_{S,\theta}(\hat{\mathbf{y}}_i,y),s)]$ for early-stopping or hyperparameter tuning)
                \State \Return $f_{S,\theta^*}$ (trained model)
            \end{algorithmic}
        \end{algorithm}
    \end{minipage}
        \hfill
    \begin{minipage}[t]{0.48\textwidth}
        \begin{algorithm}[H]
            \caption{Alignment Model Inference and Evaluation}
            \label{alg:alignment_inference}
            \begin{algorithmic}[1]
                \Require Test set $\mathcal D_{\rm test}\!=\!\{(\hat{\mathbf{y}}_i,y_i,s^d_i)\}_{i=1}^{N_{test}}$, Trained model $f_{S,\theta^*}$
                \State $\displaystyle \hat s^d_i \leftarrow f_{S,\theta^*}(\hat{\mathbf{y}}_i,\,y_i)\quad\forall i$
                \State $\displaystyle \mathrm{MAE}\leftarrow\frac1{N_{test}}\sum_{i=1}^{N_{test}}|\hat s^d_i - 
                s^d_i|$
                \State \Return Predicted scores $\{\hat s^d_i\}_{i=1}^N$, $\mathrm{MAE}$
            \end{algorithmic}
        \end{algorithm}
    \end{minipage}
\end{figure*}

\subsection{Evaluating the Alignment}\label{sec:alignment_eval_metrics}

We consider a distance-based and a rank-based metric to assess the goodness of the alignment.
In distance-based assessment, we use the mean absolute error between the estimated downstream score $\hat{s}^d$ and the downstream score $s^d$: 
$$
mae(\hat{s}^d, s^d) = |\hat{s}^d - s^d|
$$

With a perfect alignment $mae(\hat{s}^d, s^d) \to 0$. \\

In the \textit{rank-based assessment}, we evaluate how well the upstream evaluation agrees with the downstream evaluation in ranking different predictive models. Specifically, we use the Kendall tau correlation coefficient (\( \tau \)) to quantify the rank agreement between the two across test instances. The coefficient ranges from \( -1 \) (complete disagreement) to \( 1 \) (perfect agreement), with \( \tau = 0 \) indicating no correlation between the rankings. A higher \( \tau \) value reflects stronger agreement, and perfect alignment is achieved when \( \tau = 1 \).

\Cref{alg:alignment_inference} outlines the inference and evaluation procedure for the trained alignment model. Given a test set containing predictive samples, observations, and corresponding downstream scores, the model produces aligned score estimates \(\hat{s}^d\) for each instance. The quality of these predictions is then assessed using the $mae$ and $\tau$.

\subsection{Architecture}

The neural-based weighted scoring rule represents the weighting function for a scoring rule by a neural network primarily using monotonic and linear layers. We adopt the monotonic layer from~\cite{runje2023constrained} to ensure monotonicity over transformations $\nu$ and $h$. %while using a standard linear layer. 
The architecture is depicted in Fig.~\ref{fig:architecture} comprised of two blocks, namely, the input-transformation block, parameterizing the transformation function $\nu$, which is responsible for a monotonic map of the inputs to the scoring rule $S$, and the output-transformation block, parameterizing the monotonic transformation $h$ of the scoring rule's output.  

Scoring rule $S$ is passed as a function to the model and acts as an operator within the architecture. Since we intend to use the same transformation for both inputs to the scoring rule, we use the same input-transformation block, i.e., the same transformation for both prediction and observation. %That means two forward passes for this block, one for each input, to be passed to $S$, and $S$ returns one single value that will be transformed by the output-transformation block.
That means, for each input instance, $M\!+\!1$ forward passes are applied using the input-transformation block because there are $M$ predictive samples and one observation for each input to $S$. And for the output-transformation block, only one forward pass is applied using this block, since $S$ returns a single real value.

\begin{figure}[ht]
    \centering
    \includegraphics[width=0.99\linewidth]{./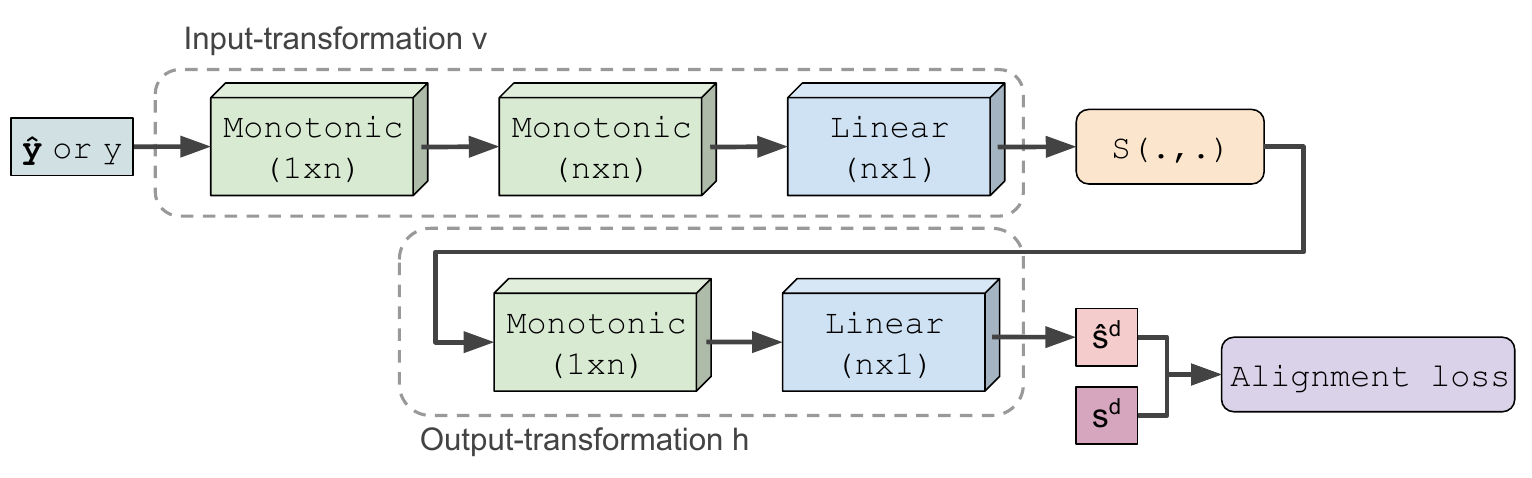}
    \caption{Alignment model architecture consists of monotonic and linear layers. Scoring rule $S$ is passed to the network as an operator within the neural network. The preceding layers of the scoring rule perform transformation $\nu$ to the scoring rule's input, and the proceeding layers perform tranformation $h$ to the scoring rule's output. %The resulting output of the network provides a neural-based scoring rule that estimates the downstream score. Alignment loss is according to Eq.~\ref{eq:alignment_loss}
    }\vspace{2em}
    \label{fig:architecture}
\end{figure}

The input features to the pipeline are one-dimensional vectors of $\hat{\mathbf{y}}$ and $y$; thus, the input layer has the dimensionality of $1\times n$ where $n$ is the size of the hidden layer. Similarly, the output of the operator $S$ is a one-dimensional real value; thus, the input layer of the output-transformation block is also $1 \times n$. The number of hidden layers for each block determines the complexity of the monotonic transformation required, i.e., the number of inflection points needed, and can be considered as a hyperparameter alongside the number of weights per layer. \\

Since the monotonic layers output in a normalized range of $[-1, 1]$, we conclude the monotonic transformations with a linear transformation in the input-transformation block to accommodate for unbounded real values.

%%%%%%%%%%%%%%%%%%%%%%%%%%%%%%%%%%%%%%%%%%%%%%%%%%%%%%%%%%%%%%%%%%%%%%%%

\section{Experiments and Results}\label{sec:experiments}

We considered two sets of experiments one with synthetic downstream where we generate the downstream value from a known function, and second experiment with an actual downstream task. The former is a type of sanity check for our method, while the latter puts it into test in a realistic setup. Additional details and experiments can be found in~\cref{sec:appdx_exp_convex,sec:appdx_exp_synth_downstream_ext,sec:appdx_exp_downstream_realdata} of the~\appendixtext. Code used for the experiments is available in our Github repository~\cite{githubrepo}. 

\subsection{Synthetic Downstream}\label{sec:syn_regression}
For this experiment we follow the steps: (1) create a synthetic regression dataset, (2) fit a probabilistic model on this data and then do inference to obtain $\hat{P}_Y$ (see~\cref{alg:predictive_train,alg:predictive_inference} in the~\appendixtextnocite), (3) instead of a real downstream decision making task, we adopt a weighted scoring rule, namely, the threshold-weighted CRPS ($twCRPS$) and generate target downstream scores $s^d$ based on some fixed chaining function $\nu$, (4) use the procedure from~\cref{alg:alignment_train} to train the alignment model to learn the weight function $\nu$ and obtain the estimated downstream score $\hat{s}^d$. 
Here, we focus on Steps 3 and 4. For more details on Steps 1 and 2, see~\cref{sec:appdx_exp_synth_downstream_ext} in the~\appendixtext.

In Step 3, instead of having an actual downstream task, we adopt the kernel form of $twCRPS$ from~\cref{eq:twcrps} as the downstream evaluation function and generate synthetic downstream scores with some fixed chaining function $\nu$. To the alignment model the weight function (and the respective chaining function) are unknown, but this setup allows us to have a ground truth for the weight function which can be seen as a sanity check for the alignment pipeline.
For the chaining function $\nu$, we consider different functions as reported in~\cref{tab:chaining_table}. Chaining functions I-III are based on the ones introduced in~\cite{allen2023evaluating} while we devised chaining function IV as a more expressive form. An illustration of each function type can be found in~\cref{fig:synth_chaining} of the~\appendixtext. 

\begin{table}[ht]
    \centering
    \caption{Chaining functions considered for the synthetic experiment.}
    \begin{tabular}{ccl}
         & Name & Chaining Function \\\hline
        I & Threshold & $\nu(z;t) = max(z, t)$ \\
        II & Interval & $\nu(z;a,b) = min(max(z, a), b); a < b$ \\
        III & Gaussian & $\nu(z;t,\mu,\sigma) = (z-t)  \Phi_{\mu, \sigma}(z) + \sigma^2 \phi_{\mu, \sigma}(z)$ \\
        IV & SumSigmoids & $\nu(z;\mathbf{a},\mathbf{b},\mathbf{c},\mathbf{d}) = \sum_i \frac{c_i}{1+\exp{(-(a_iz + b_i))}+d_i}$ \\
    \end{tabular}
    \label{tab:chaining_table}
\end{table}

In Step 4, we use the same kernel form of twCRPS from~\cref{eq:twcrps} that we generated the downstream target scores with, but now it is parametrized using a neural network $f$ with parameters $\bm{\theta}$ according to~\cref{eq:parameterized_tw_scoringrule}. %The estimated downstream score is then:
We follow~\cref{alg:alignment_train} to learn optimal parameters for the alignment model. Thanks to this setup we can compare $\hat{\nu} = \nu(.,\hat{\theta}_1)$ with its true ground truth $\nu$ chosen in the generation step. To obtain $\hat{s}^d_i$ we follow~\cref{alg:alignment_inference} for inference. 

For this experiment, our method can find near-perfect alignment as depicted in~\cref{fig:res_synth_reg_short}. Thus, we do not report the evaluation metrics for this experiment, as the errors are extremely close to zero. 
This figure shows the weighting function recovered almost perfectly by the alignment model. By definition, since $\frac{d}{dz}(\nu(z)+c) = w(z)$, the alignment model learns $\hat{\nu} = \nu + c$ with an arbitrary constant $c$, leading to an offset in $\hat{\nu}$ but this does not make a difference for the alignment.  
For additional implementation details see~\cref{sec:appdx_exp_synth_downstream_ext} in the~\appendixtext.

\begin{figure*}[htbp]
    \centering
    \begin{subfigure}[b]{0.24\textwidth}
        \centering
        \includegraphics[width=\textwidth]{./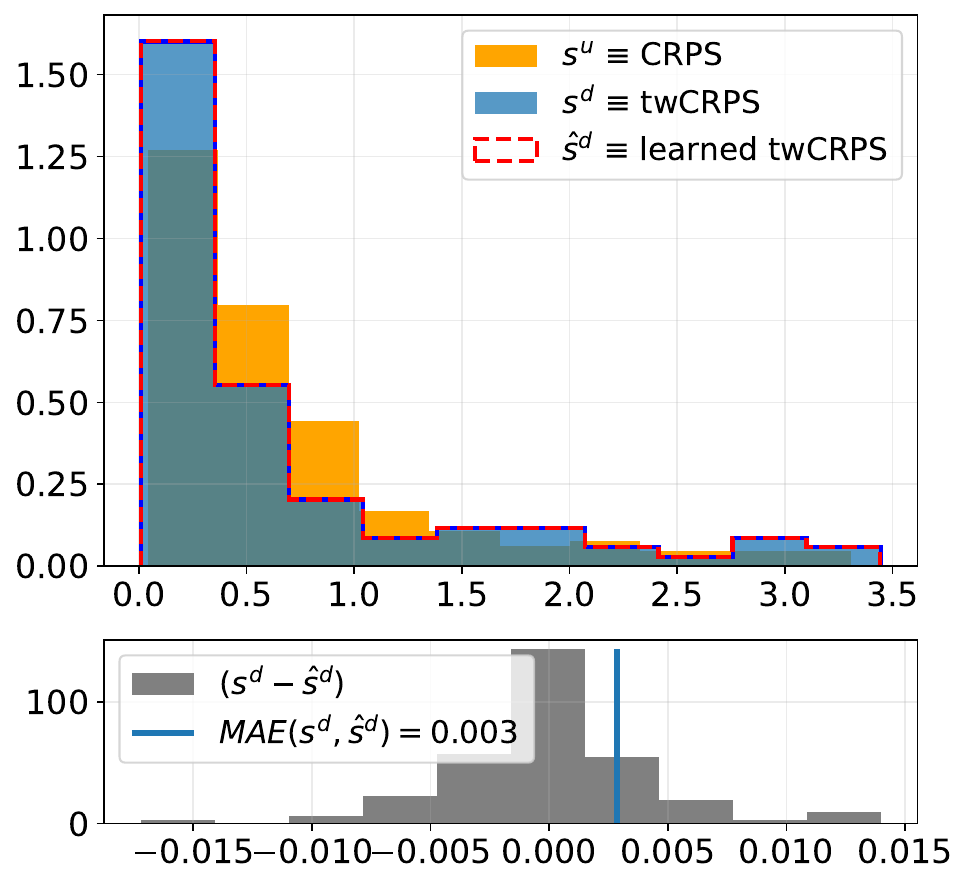} 
    \end{subfigure}
    \begin{subfigure}[b]{0.23\textwidth}
        \centering
        \includegraphics[width=\textwidth]{./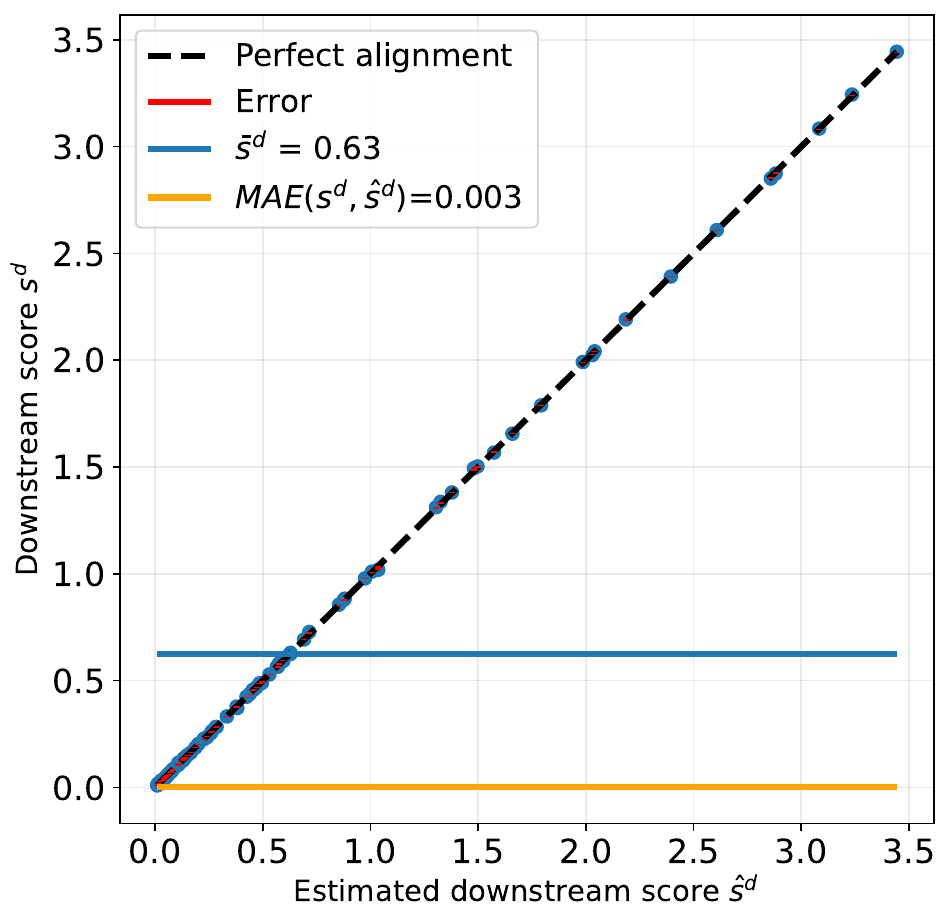} 
    \end{subfigure}
    \begin{subfigure}[b]{0.24\textwidth}
        \centering
        \includegraphics[width=\textwidth]{./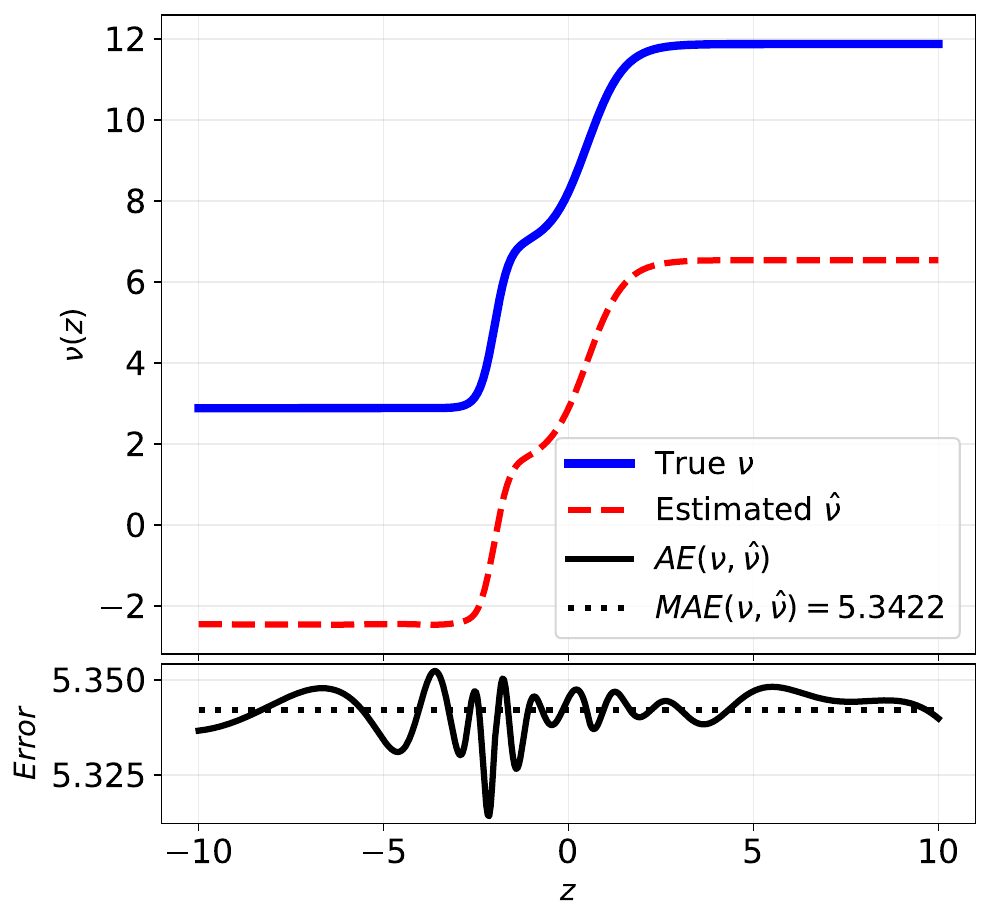} % 
    \end{subfigure}
    \begin{subfigure}[b]{0.24\textwidth}
        \centering
        \includegraphics[width=\textwidth]{./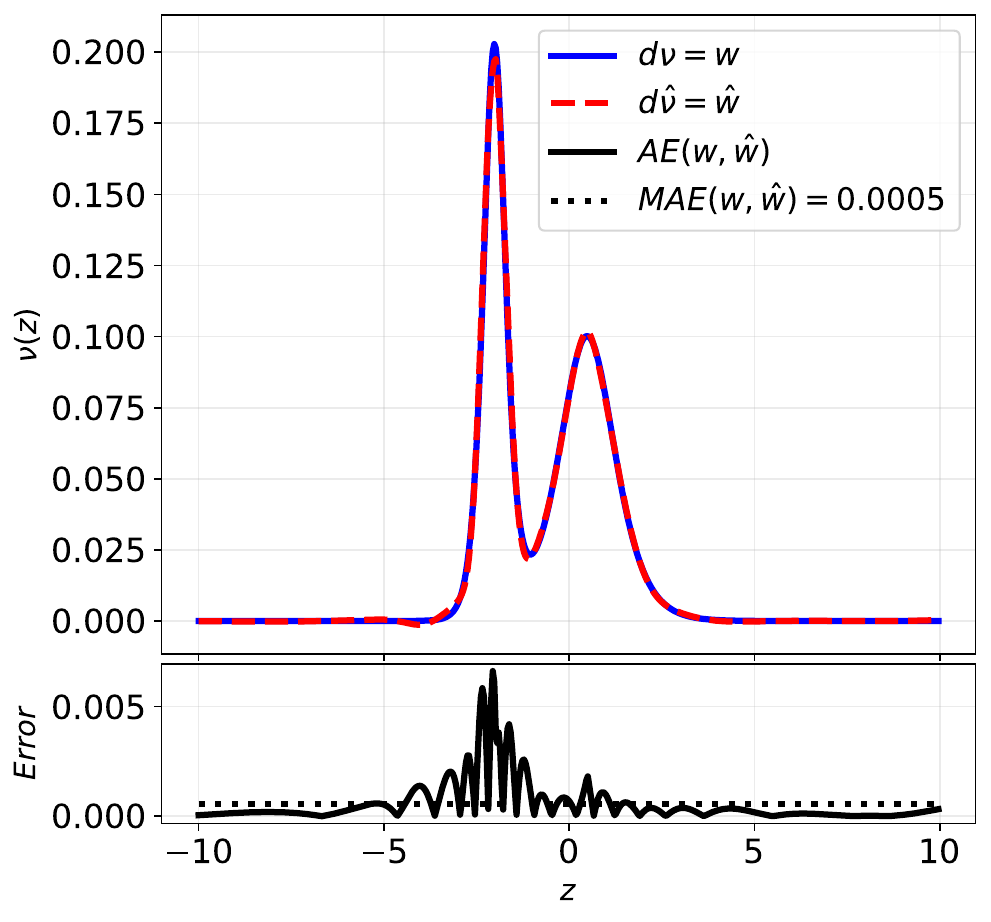} % 
    \end{subfigure}
    \caption{Results obtained from Sum of Sigmoid chaining for the synthetic downstream experiment on test set. From left to right: histogram of scores and alignment errors, scatter plot of the aligned and target scores, chaining, and weighting functions, respectively.}\vspace{2em}
    \label{fig:res_synth_reg_short}
\end{figure*}

\subsection{Inventory Optimization as Downstream}\label{sec:realdata_experiment}

The steps for our inventory optimization experiment are similar to~\cref{sec:syn_regression} except that in the third step, we use an actual downstream decision-making task. For this experiment, we adapt the seafood distribution center inventory optimization example from the Pyomo book~\cite{PZGK2025book,PZGK2025online} which follows the Newsvendor model~\cite{khouja1999single,huber2019data}. To apply this optimization task to a real dataset from Kaggle~\cite{kagglerepo}, we made a minor modification to the decision task to operate on a \emph{monthly timescale}. In this setting, a seafood distribution center decides each month on the action of how much tuna to purchase, denoted by $a_t$, at a unit procurement cost $c_t$. After the month ends, customer demand $d_t$ is revealed, and the firm sells up to $\min\{d_t, a_t\}$ tons at a unit selling price $p_t$. Any leftover stock $(a_t - d_t)^+$ incurs a storage holding cost of $h_t$ per ton, where $(\cdot)^+$ is the positive residual (that is $r^+:=\max\{0,r\}$). The expected monthly profit is given by:
\[
\max_{a_t \geq 0} \; \mathbb{E}_{d_t \sim \hat{P}_t} \left[ p_t \min\{d_t, a_t\} - c_t a_t - h_t (a_t - d_t)^+ \right],
\]
where $\hat{P}_t$ denotes the forecast distribution of monthly demand and $t$ is the time index, i.e. $t=\{1,\dots,N\}$.

This formulation slightly differs from the original version in the book as we incorporate \emph{time-varying demand and pricing information} instead of a stationary demand distribution and constant costs. %, allowing us to optimize the purchase quantity separately for each month based on its specific forecast. 
For simplicity, we assume that monthly decisions are \emph{independent}, meaning no constraints or inventory carryover link decisions between months. Each month’s demand is modeled independently based on its forecast $\hat{P}_t$. There is \emph{no beginning or ending inventory} — leftover fish is treated as a holding cost rather than carried to the next period. These assumptions allow us to treat each month as a self-contained decision problem. Intuitively, this models a setting where products (e.g., fresh seafood) are perishable or reset monthly, and storage is used temporarily but not for long-term inventory planning.

\begin{table}[ht]
    \centering
    \caption{Evaluation of the Alignment for the Inventory Optimization. Non-aligned evaluation refers to using CRPS while aligned refers to evaluation using our pipeline. Values correspond to Train/Test. MAE is divided by 1e+6, i.e., errors are in terms of million €. %Values correspond to the average and standard deviation across.% run each run for 10 different random seeds.
    }
    \begin{tabular}{cccc}
                    \toprule
                    & $MAE$ & $\tau$ &  $\Delta\tau (\%)$ \\
                    % \midrule
        % index 0 in the experiment source code            
        \midrule
        Subset 1      &            &              & \\
        Non‐aligned   & 2.54/3.39  & -0.09/-0.22  & -             \\
        Aligned       & 0.72/1.01  &  0.74/0.70   & 83.0/92.0     \\ \hline
        % index 1 in the experiment source code
        Subset 2      &            &              &  \\
        Non‐aligned   & 0.94/0.97  &  0.25/ 0.51  & -             \\
        Aligned       & 0.25/0.22  &  0.69/ 0.79  & 44.0/24.0     \\ \hline
        % Average
        Average      &         &              &  \\
        Non‐aligned  & 1.74/2.18 & 0.08/0.15  & -              \\
        Aligned      & 0.49/0.64 & 0.72/0.73  & 63.5/58.0      \\ \bottomrule
        
    \end{tabular}
    
    \label{tab:res_inventory_realdata}
\end{table}

Unlike the experiment from~\cref{sec:syn_regression}, we do not know the true weighting function in this experiment. However, we measure the quality of the alignment based on the metrics introduced in~\cref{sec:alignment_eval_metrics} with an emphasis on Kendall tau, since for alignment, naturally, ranking and correlation of the results matter even more than the scale and magnitude.
\Cref{tab:res_inventory_realdata} shows the result for two subsets of the dataset, showing average performance. Aligned evaluation improves Kendall tau by $58\%$ on average compared to the non-aligned evaluation on the test set. The corresponding alignment curve and the weighting function $\nu$ for "Subset 2" are depicted in~\cref{fig:res_real_downstream}.

\begin{figure}[H]
    \centering
    \begin{subfigure}[b]{0.45\columnwidth}
        \centering
        \includegraphics[width=\textwidth]{./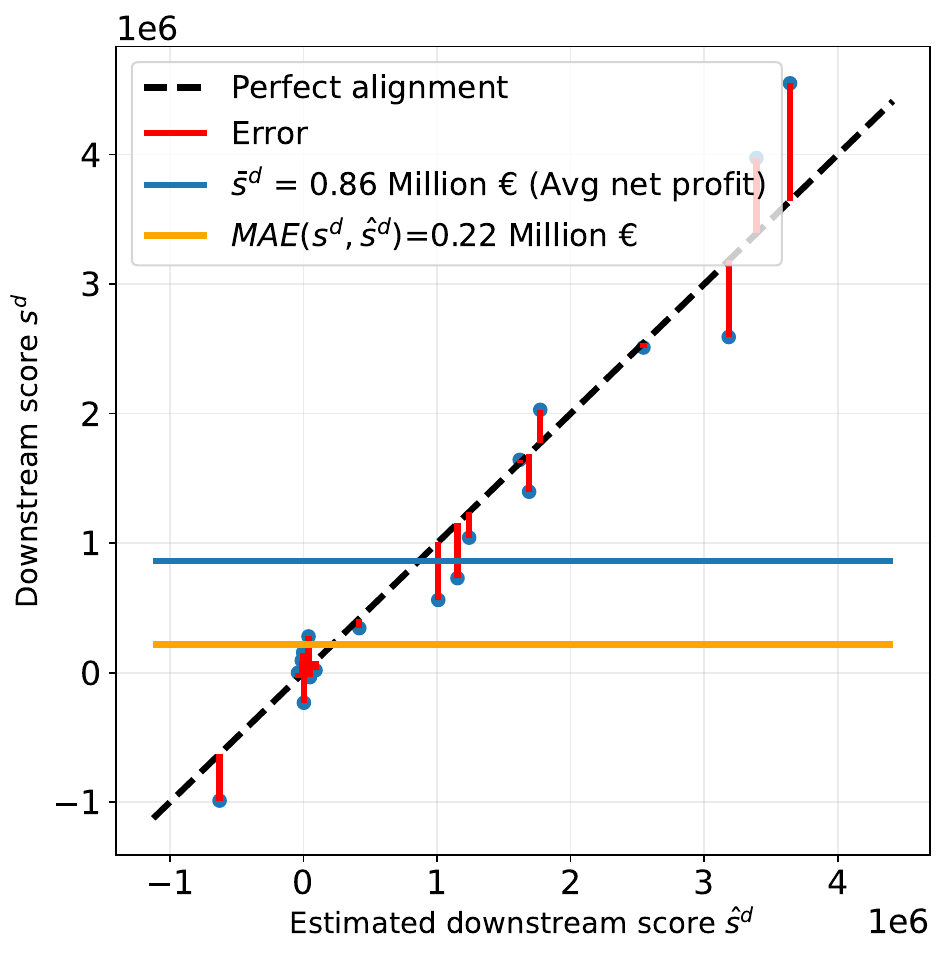} 
    \end{subfigure}
    \begin{subfigure}[b]{0.43\columnwidth}
        \centering
        \includegraphics[width=\textwidth]{./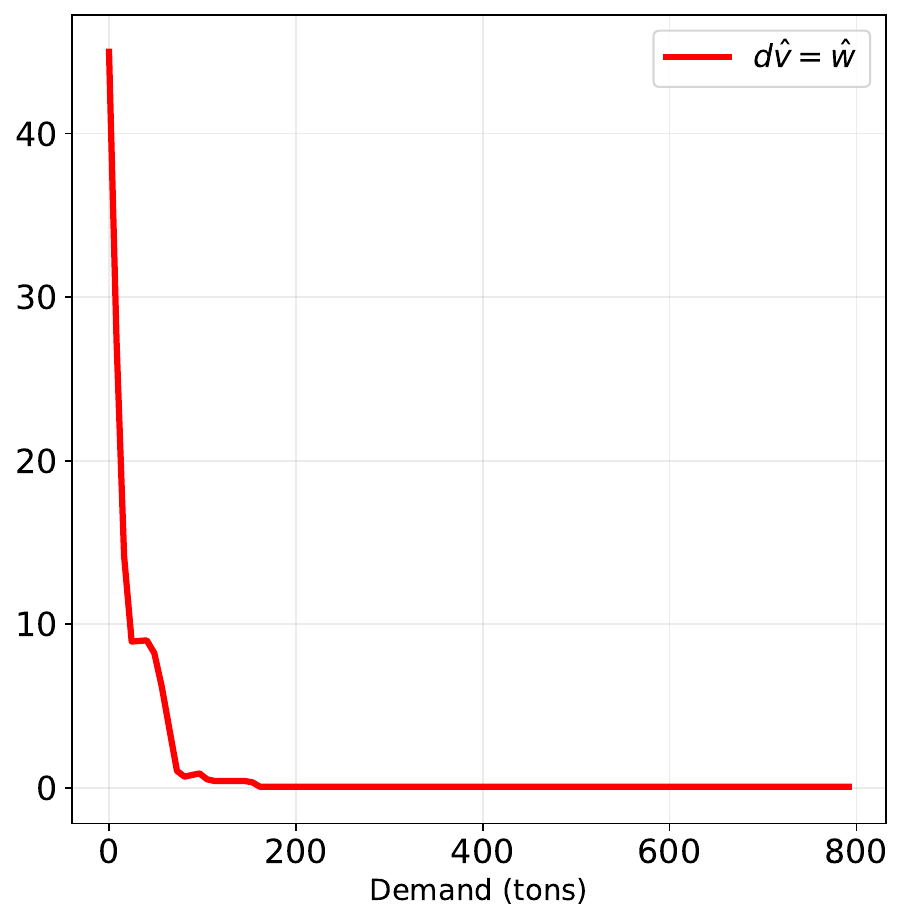} % 
    \end{subfigure}
    \caption{Results for "Subset 2" on the test set. \textbf{Left:} alignment curve, \textbf{Right:} learned weighting function $\hat{w}$ corresponding to $\hat{\nu}$.}\vspace{1em}
    \label{fig:res_real_downstream}
\end{figure}
Further details of the experiment can be found in~\cref{sec:appdx_exp_downstream_realdata} of the~\appendixtext. 

%%%%%%%%%%%%%%%%%%%%%%%%%%%%%%%%%%%%%%%%%%%%%%%%%%%%%%%%%%%%%%%%%%%%%%%%

\section{Conclusion}\label{sec:conclusion}

In this paper, we have introduced the \emph{evaluation alignment problem}, which arises when standard predictive performance metrics diverge from the true downstream utility of probabilistic forecasts in decision‐making under uncertainty. To address this misalignment, we proposed a novel alignment pipeline that learns a proxy evaluation function directly from downstream scores. By parametrizing a scoring rule with a neural network, our method automatically discovers complex, task‐specific weighting without requiring an expert‐specified weight/cost structure. 
 
Achieving high-quality alignment depends on selecting an expressive function family and having sufficient data coverage. As shown in our synthetic experiments, when these conditions are met, the model can accurately approximate the true alignment function. However, if the training samples are non-representative, the learned alignment may fail to generalize. 
Our results demonstrate that the learned evaluation closely tracks true downstream performance in both of our experiments. %The alignment performance can be further improved on the real data experiment by increasing training samples and incorporating additional information, such as the price variable used in the downstream at the alignment phase, too. 
Alignment performance on real data can improve with more training samples and by including downstream-only variables (e.g., price) in the alignment phase.

In general, the downstream task may be unknown, unobservable, partially observable, or fully observable. When the downstream is unknown, one may rely on standard predictive quality metrics, some of which may still reflect priorities coherent with the downstream application. When it is not fully observable, domain experts can encode preferences through the weighting of scoring rules. In our work, we consider the setting where the downstream task is observable, with sufficient samples to train and validate the alignment model. More complex scenarios may arise, such as aligning to one observable downstream task while transferring alignment to others that are unobservable, an interesting direction for future research. 

Looking ahead, several extensions are promising. First, enriching alignment training with synthetic or bootstrapped samples could better approximate the downstream loss landscape. Second, more complex downstream tasks, including multi-stage or sequential optimization, can be explored. Third, incorporating downstream information such as price signals or context-specific covariates absent at prediction time may refine alignment and capture subtler cost asymmetries. Finally, an especially promising direction is to use the aligned scoring rule as a loss for decision-focused learning. Moreover, beyond univariate regression, the framework also extends naturally to classification and multivariate regression, albeit with added complexities that warrant separate study.

%%%%%%%%%%%%%%%%%%%%%%%%%%%%%%%%%%%%%%%%%%%%%%%%%%%%%%%%%%%%%%%%%%%%%%%%

%%% Use this environment to include acknowledgements (optional).
%%% This will be omitted in doubleblind mode.

\begin{ack}
This work was supported by the Estonian Research Council grant \href{https://www.etis.ee/Portal/Projects/Display/270ffbc3-9d3b-4be4-a0e8-d3c63ecc8b54}{PRG1604}, by the Estonian Academy of Sciences research professorship in AI, and by the Estonian Centre of Excellence in Artificial Intelligence (\href{https://exai.ee/}{EXAI}), funded by the Estonian Ministry of Education and Research grant \href{https://www.etis.ee/Portal/Projects/Display/97213ae6-cd28-4f84-84c2-ac61f7658670}{TK213}.
\end{ack}

%%%%%%%%%%%%%%%%%%%%%%%%%%%%%%%%%%%%%%%%%%%%%%%%%%%%%%%%%%%%%%%%%%%%%%%%

%%% Use this command to include your bibliography file.
\bibliography{references}

%%%%%%%%%%%%%%%%%%%%%%%%%%%%%%%%%%%%%%%%%%%%%%%%%%%%%%%%%%%%%%%%%%%%%%%%
\newpage
\onecolumn
\ifforarxiv
  \section*{Appendix}
\else
  \section*{Supplementary Material}
\fi
\renewcommand{\thesubsection}{A.\arabic{subsection}}
\setcounter{subsection}{0}
\subsection{Proofs}
\textbf{Restatement of Proposition 1:} \\
Let \(L(P,y)\) be a (strictly) proper scoring rule on an outcome space \(\mathcal Y\subseteq\mathbb R\).  Define
\[
S(P,y)\;=\;h\!\bigl(L\bigl(P\circ \nu^{-1},\nu(y)\bigr)\bigr),
\]
where \(\nu:\mathcal Y\to\mathcal Y\) and \(h:\mathbb R\to\mathbb R\) are continuous functions.  Then \(S\) is (strictly) proper if and only if:
\begin{enumerate}
  \item \(\nu\) is a strictly monotonic bijection on \(\mathcal Y\), and
  \item \(h(s)=a\,s+b\) is an affine function with \(a>0\).
\end{enumerate}
No other choice of \((h, \nu)\) preserves propriety.
\begin{proof}
\textbf{Sufficiency.}

By linearity of expectation (affine invariance of the argmin), for any ground truth $Q$ and $h(s)=a s+b$ with $a>0$,
\[
\operatorname*{arg\,min}_{P\in\mathcal P(\mathcal Y)}
\ \mathbb{E}_{Y\sim Q}\!\big[h(L(P,Y))\big]
=
\operatorname*{arg\,min}_{P\in\mathcal P(\mathcal Y)}
\ \big(a\,\mathbb{E}_{Y\sim Q}L(P,Y)+b\big)
=
\operatorname*{arg\,min}_{P\in\mathcal P(\mathcal Y)}
\ \mathbb{E}_{Y\sim Q}L(P,Y),
\]
so the positive affine transformation preserves properness. 

By the change-of-variables formula (pushforward invariance), if $\nu$ is bijective then for any ground truth $Q$,
\[
\mathbb{E}_{Y\sim Q}\,L\big(P\circ\nu^{-1},\nu(Y)\big)
=
\mathbb{E}_{W\sim \nu_{\#}Q}\,L\big(P\circ\nu^{-1},W\big),
\]
where $\nu_{\#}Q$ denotes the pushforward measure $((\nu_{\#}Q)(A)=Q(\nu^{-1}(A)))$.
Since $L$ is (strictly) proper, its conditional risk is minimized (uniquely, in the strict case) at
$P\circ\nu^{-1}=\nu_{\#}Q$, which (because $\nu$ is bijective) is equivalent to $P=Q$.
Hence $S(P,y)=h(L(P\circ\nu^{-1},\nu(y)))$ is (strictly) proper.

\smallskip

\textbf{Necessity.} 

\subparagraph{A.  Non‑monotone $\nu$ breaks convexity.}

Assume $\nu$ is \emph{not} strictly monotone. Then there exist $y_0<y_1$ with either
(i) a \emph{collision} $\nu(y_0)=\nu(y_1)$, or
(ii) an \emph{order reversal} $\nu(y_0)>\nu(y_1)$.

\smallskip
\textit{Collision.}
Let $Q_{1/2}:=\tfrac12\delta_{y_0}+\tfrac12\delta_{y_1}$ be the (equal‐mass) ground truth, and take three forecasts
$P_0:=\delta_{y_0}$, $P_1:=\delta_{y_1}$, and $P_{1/2}:=\tfrac12 P_0+\tfrac12 P_1$.
Because $\nu(y_0)=\nu(y_1)$, we have the same pushforward forecast
\[
P_0\circ\nu^{-1}\;=\;P_1\circ\nu^{-1}\;=\;P_{1/2}\circ\nu^{-1}.
\]
By the change–of–variables identity (pushforward invariance),
\[
\mathbb{E}_{Y\sim Q_{1/2}}\,S(P_k,Y)
\;=\;
\mathbb{E}_{W\sim (\nu_{\#}Q_{1/2})}\,h\!\big(L(P_k\circ\nu^{-1},W)\big)
\qquad (k\in\{0,1,1/2\}),
\]
and the right–hand side is the \emph{same} for $k=0,1,1/2$ because the argument $P_k\circ\nu^{-1}$ is the same.
Thus $P_0$, $P_1$, and $P_{1/2}$ all achieve the same expected score under $Q_{1/2}$, so the Bayes act is not unique.
Hence $S$ cannot be \emph{strictly} proper.% (Weak propriety may still hold.)

\smallskip
\textit{Reversal.}
If $\nu$ is continuous on $\mathcal Y$ and not strictly monotone, then it is not injective on some interval; in particular,
an order reversal forces a collision at some pair $y_a<y_b$ by the intermediate value theorem.
This reduces to the collision case above, so strict propriety again fails.

\smallskip
In summary, non–bijective $\nu$ (under continuity, non–strictly–monotone) yields multiple Bayes acts for some ground truth,
hence $S$ cannot be strictly proper.

\smallskip
Finally, note that the failure of strict propriety in the collision case
does not depend on the outer transform $h$.  
Whenever two forecasts $P_i$ and $P_j$ yield the same pushforward 
$P_i\circ\nu^{-1}=P_j\circ\nu^{-1}$, then for any outcome $y$ we have
\[
L(P_i\circ\nu^{-1},\nu(y)) \;=\; L(P_j\circ\nu^{-1},\nu(y)),
\]
and consequently
\[
h\!\big(L(P_i\circ\nu^{-1},\nu(y))\big) \;=\;
h\!\big(L(P_j\circ\nu^{-1},\nu(y))\big).
\]
Thus their expected transformed scores coincide under every ground truth,
so $h$ cannot restore uniqueness of the Bayes act.  
In other words, applying $h$ on top of $L$ cannot recover strict propriety
once $\nu$ is non‐injective.

\subparagraph{B.  Non‑affine $h$ breaks convexity.}

Let $L(P,y)$ be a strictly proper scoring loss.  
Define its associated \emph{entropy} as
\[
\Phi(P) \;:=\; \mathbb E_{Y\sim P}[\,L(P,Y)\,].
\]
It is a standard fact \cite{savage1971elicitation} that:
\begin{itemize}
  \item $\Phi$ is convex on the space of probability distributions,
  \item $L$ admits a ``Savage representation'' of the form
  \[
  L(P,y) \;=\; \Phi(P) + D\Phi(P;\,\delta_y - P),
  \]
  where $D\Phi$ is the directional derivative of $\Phi$.
  \item Strict propriety of $L$ is equivalent to strict convexity of $\Phi$.
\end{itemize}

Now, let us consider the scoring rule $S$, which, by assumption, is equal to:
\[
S(P,y) \;=\; h\!\Big(L(P\circ\nu^{-1}, \nu(y))\Big).
\]

Fix a ground truth $Q$. The Bayes risk of $S$ is
\[
\mathcal R_S(P;Q)\;=\;\mathbb E_{Y\sim Q}\,S(P,Y)
\;=\;\mathbb E_{W\sim Q^{\nu}}\!\Big[h\big(L(P^{\nu},W)\big)\Big],
\qquad Q^{\nu}:=\nu_{\#}Q.
\]
For a zero-mass signed direction $H$ (i.e.\ $\int dH=0$, to keep forecast inside probability simplex), consider $P_t=P+tH$ so that $(P_t)^{\nu}=P^{\nu}+t\,\nu_{\#}H$. Using the chain rule and the Savage form,
\[
\frac{d}{dt}\Big|_{t=0}\mathcal R_S(P_t;Q)
=\mathbb E_{W\sim Q^{\nu}}
\!\left[
h'\!\big(L(P^{\nu},W)\big)\cdot D_{Q'}L(Q',W)\big[\,\nu_{\#}H\,\big]\Big|_{Q'=P^{\nu}}
\right].
\]
Since $L(Q',W)=\Phi(Q')+\langle \nabla\Phi(Q'),\,\delta_W-Q'\rangle$, differentiation in $Q'$ (along direction K) gives
\[
D_{Q'}L(Q',W)[K]=\big\langle \mathsf H_\Phi(Q')[K],\,\delta_W-Q'\big\rangle,
\]
hence
\[
D\mathcal R_S(P;Q)[H]\;=\;\Big\langle \,\mathsf H_\Phi(P^{\nu})[\,\nu_{\#}H\,],\;
\mathbb E_{W\sim Q^{\nu}}\!\big[h'\!\big(L(P^{\nu},W)\big)\,(\delta_W-P^{\nu})\big]\Big\rangle,
\]

where $H_\Phi(P^{\nu})$ is a Hessian (Fr\'echet second derivative) of $\Phi$, evaluated at forecast $P^{\nu}$. As shown earlier, the strict propriety of $L$ is equivalent to the strict convexity of $\Phi$.

\medskip
Strict propriety requires that $P=Q$ be a (strict) minimizer of $P\mapsto \mathcal R_S(P;Q)$.
A necessary condition is stationarity for all zero-mass $H$:
\[
0=D\mathcal R_S(Q;Q)[H]
=\Big\langle \,\mathsf H_\Phi(Q^{\nu})[\,\nu_{\#}H\,],\;
M_{Q^{\nu}} \Big\rangle,
\quad\text{where}\quad
M_{Q^{\nu}}:=\mathbb E_{W\sim Q^{\nu}}\!\big[h'\!\big(L(Q^{\nu},W)\big)\,(\delta_W-Q^{\nu})\big].
\]
Assume $\nu$ is injective (because otherwise, as we showed in Part A, strict properness fails) so that $H\mapsto \nu_{\#}H$ spans the full zero-mass tangent space on the $\nu$-pushforward simplex. Since $\mathsf H_\Phi(Q^{\nu})$ is positive definite on that space (as strict properness of $L$ is equivalent to strict convexity of $\Phi$), the only way the inner product above can vanish for \emph{all} $H$ is
\[
M_{Q^{\nu}}=0,\qquad\text{i.e.}\qquad
\mathbb E_{W\sim Q^{\nu}}\!\big[h'\!\big(L(Q^{\nu},W)\big)\,(\delta_W-Q^{\nu})\big]=0.
\tag{$\dagger$}
\]

\medskip
\textbf{Centering implies $h'$ is constant on the range of $L$.}
Identity $(\dagger)$ says that the signed measure with density $h'\!\big(L(Q^{\nu},W)\big)$ against the residual $(\delta_W-Q^{\nu})$ has mean zero.
Equivalently, for every bounded measurable $\varphi$,
\[
\mathbb E\!\left[h'\!\big(L(Q^{\nu},W)\big)\,\varphi(W)\right]
=\mathbb E\!\left[h'\!\big(L(Q^{\nu},W)\big)\right]\cdot \mathbb E\!\left[\varphi(W)\right].
\]
Thus $\operatorname{Cov}\!\left(h'(L(Q^{\nu},W)),\,\varphi(W)\right)=0$ for all $\varphi$, which forces $h'(L(Q^{\nu},W))$ to be $Q^{\nu}$-a.s.\ constant.
If, for some $Q^{\nu}$, the map $w\mapsto L(Q^{\nu},w)$ attains an interval of values (this holds for any strictly proper loss on a binary subproblem), then $h'$ is constant on an interval, hence $h$ is affine on that interval. By connectivity of the attainable range of $L$, $h$ is affine globally on the relevant range:
\[
h(s)=a s + b.
\]

Moreover, if $a<0$, then minimizing $\mathbb E_{Y\sim Q}[S(P,Y)]$ would be equivalent to maximizing $\mathbb E_{Y\sim Q}[L(P,Y)]$, which contradicts propriety.  
If $a=0$, then $S$ is constant and thus not strictly proper.  
Hence we must have $a>0$.

As we showed at the end of Part A, a non-bijective $\nu$ cannot restore strict properness for either an affine or a non-affine $h$.

\end{proof}
\subsection{Algorithms}

Here we provide the algorithms for the predictive modeling step separately for the training in~\cref{alg:predictive_train} and inference step in~\cref{alg:predictive_inference} as well as the downstream optimization in~\cref{alg:decision_optimization} and its evaluation in~\cref{alg:decision_eval}. 
% ------------------------------------------------------------
% Algorithm 1: Predictive Model
% ------------------------------------------------------------
\begin{center}
\begin{minipage}{0.8\linewidth}
\begin{algorithm}[H]
\caption{Predictive Model Training}
\label{alg:predictive_train}
\begin{algorithmic}[1]
\Require Predictive model dataset $\mathcal{D} = \{(\mathbf{x}_i, y_i)\}_{i=1}^N$, probabilistic model $g_\theta$, loss function $l$, learning rate $\eta$, number of epochs $E$, batch size $B$.
\Ensure Trained model $g_{\theta^*}$
\State Shuffle $\mathcal{D}$ and split into $\mathcal{D}_{train}$, $\mathcal{D}_{val}$, $\mathcal{D}_{test}$
\State Initialize model parameters $\theta$

\For{epoch $e = 1$ to $E$}
    \For{each mini-batch $\{(\mathbf{x}_i, y_i)\}_{i \in B} \subset \mathcal{D}_{train}$}
        \State Make predictions
        $\hat{P}_{Y_i} \gets g_\theta(\mathbf{x}_i), \quad \forall i \in B$
        \State Sample from the predictive distribution $\mathbf{\hat{\mathbf{y}}_i} \gets \{\hat{y}^{(j)}_i\}_{j=1}^{M} \sim \hat{P}_{Y_i}$
        \State Compute training loss
        $\mathcal{L}_{train} \gets \frac{1}{B} \sum_{i \in B} l(\hat{\mathbf{y}}_i, y_i)$
        \State Update parameters
        $\theta \gets \theta - \eta \nabla_\theta \mathcal{L}_{train}$
    \EndFor

    \State (Optionally: monitor validation loss 
    $\mathcal{L}_{val} \gets \frac{1}{|\mathcal{D}_{val}|} \sum_{i \in \mathcal{D}_{val}} (\hat{y}_i - y_i)^2$ for early stopping or hyperparameter tuning)
\EndFor
\State \Return $g_{\theta^*}$ (trained model)
\end{algorithmic}
\end{algorithm}
\end{minipage}
\end{center}
\begin{center}
\begin{minipage}{0.8\linewidth}
\begin{algorithm}[H]
\caption{Predictive Model Inference and Evaluation}
\label{alg:predictive_inference}
\begin{algorithmic}[1]
\Require Datasets $\mathcal{D}_{train}, \mathcal{D}_{val}, \mathcal{D}_{test}$, trained model $g_{\theta^*}$
\State Initialize $P_Y^{val} \gets []$, $P_Y^{test} \gets []$
\For{each instance $\mathbf{x}_i \in \mathcal{D}_{val}$}
    \State $\hat{P}_{Y} \gets g_{\theta^*}(\mathbf{x}_i)$
    \State Append $\hat{P}_{Y}$ to $P_Y^{val}$
\EndFor
\For{each instance $\mathbf{x}_i \in \mathcal{D}_{test}$}
    \State $\hat{P}_{Y} \gets g_{\theta^*}(\mathbf{x}_i)$
    \State Append $\hat{P}_{Y}$ to $P_Y^{test}$
\EndFor
\State \Return $P_Y^{val}, P_Y^{test}$
\end{algorithmic}
\end{algorithm}
\end{minipage}
\end{center}

% ------------------------------------------------------------
% Decision Model
% ------------------------------------------------------------
\begin{center}
\begin{minipage}{0.8\linewidth}
\begin{algorithm}[H]
\caption{Decision Model Optimization}
\label{alg:decision_optimization}
\begin{algorithmic}[1]
\Require Downstream objective function $\pi(.,.)$, $\hat{P}_{Y}^{val}, \hat{P}_{Y}^{test}$
 \State $\hat{a}^{val} = \mathop{\arg\min}\limits_{a \in A} \mathbb{E}_{\hat{y} \sim \hat{P}_{Y}^{val}}[\pi(a,\hat{y})]$ \Comment{Solve decision problem on validation set.}
 \State $\hat{a}^{test} = \mathop{\arg\min}\limits_{a \in A} \mathbb{E}_{\hat{y} \sim \hat{P}_{Y}^{test}}[\pi(a,\hat{y})]$ \Comment{Solve decision problem on test set.}
\State \Return optimal decisions $\hat{a}^{val}, \hat{a}^{test}$
\end{algorithmic}
\end{algorithm}
\end{minipage}
\end{center}
\begin{center}
\begin{minipage}{0.8\linewidth}
\begin{algorithm}[H]
\caption{Decision Model Evaluation}
\label{alg:decision_eval}
\begin{algorithmic}[1]
\Require Optimal decisions $\hat{a}^{val}, \hat{a}^{test}$, validation set observations $\mathbf{y}^{val} \in D_{val}$, test set observations $\mathbf{y}^{test} \in D_{test}$
\State Initialize $\mathbf{s}^{d_{val}} \gets [], \mathbf{s}^{d_{test}} \gets []$
\For{each observation $y$ in $\mathbf{y}^{val}$}
    \State $s^d \gets \pi(\hat{a}^{val},y)$
    \State Append $s^d$ to $\mathbf{s}^{d_{val}}$
\EndFor

\For{each observation $y$ in $\mathbf{y}^{test}$}
    \State $s^d \gets \pi(\hat{a}^{test},y)$
    \State Append $s^d$ to $\mathbf{s}^{d_{test}}$
\EndFor

\State \Return $\mathbf{s}^{d_{val}}$, $\mathbf{s}^{d_{test}}$
\end{algorithmic}
\end{algorithm}
\end{minipage}
\end{center}

As the procedures are standard and not novel, they are presented here only for completeness. Moreover, the predictive modeling procedure assumes a neural network-based model, but it could be replaced with any class of probabilistic model. The same goes for the decision task, where it can be replaced with any stochastic programming task. On the other hand the alignment procedure introduced in~\cref{alg:alignment_train} is novel contribution of our work applicable to any predictive task and its downstream counterpart(s).  

The evaluation alignment pipeline requires two sets of inputs. One is the predictions and observations tuple from the predictive modeling phase, and the other is the downstream scores representing the downstream value corresponding to the tuple. Together, they form a triplet of predictions, observations, and downstream scores for the alignment model, where predictions and observations are input to the alignment model while downstream scores are the target.

\subsection{Experiment: Synthetic with Convex Functions}\label{sec:appdx_exp_convex}

In this experiment, which is devised as a sanity check for our alignment pipeline, we employ an alignment model to ensure that it is able to align a simple convex function to another. Illustration in~\cref{fig:illustration_firstpage} is based on this experiment. % We considered different cases for the mapping as reported in Table~\ref{tab:sanity_checks}.

Consider a toy example with two convex functions $f_1(x_i)=2x_i^2$ and $f_2(x_i)=0.5(x_i^4+2x_i^3+2x_i^2)$, where $x_i = \{\hat{y}^{(j)}_i-y_i\}_{j=1}^{M}$ with predictions $\hat{y}^{(j)}_i\sim \mathcal{U}(-3,3)$ and ground truth set to $y_i=0$ for all instances $i$. $f_1$ and $f_2$ are simplified representations of a proper scoring rule for upstream and downstream evaluations, corresponding to $s^u$ and $s^d$, respectively. Alignment aims to map $f_1$ (source) to $f_2$ (target) and have their global optima coincide by transforming $f_1$ while preserving convexity. $\nu$ corresponds to a positive/negative (a)symmetric contraction of the input to $f_1$, which, as a consequence, alters the output of $f_1$ so that it would be aligned with the output of $f_2$. As demonstrated in Fig.~\ref{fig:illustration_2}, an input-scaled $f_1$, denoted as $\tilde{f}_1$ which corresponds to $\hat{s}^d$ estimates $\nu$ denoted as $\hat{\nu}$ to achieve $\tilde{f}_1 = f_2$.

\begin{figure}[ht]
    \centering
    \includegraphics[width=0.5\linewidth]{./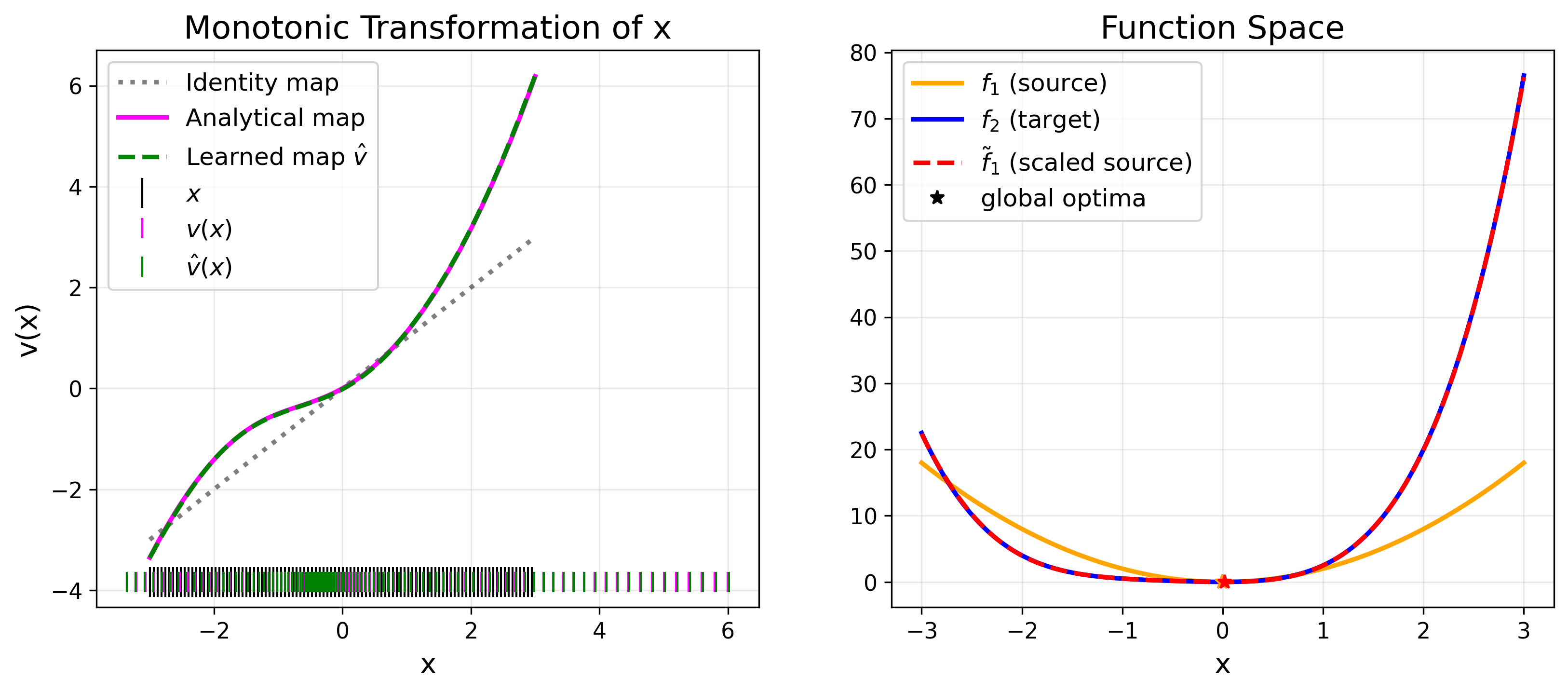}
    \caption{Illustrative toy example. Left: learned monotonic map $\hat{\nu}$ versus the identity map and analytical solution $\nu(x) = \sqrt{\frac{1}{2}f_2}\mathds{1}_{\{x \geq 0\}} - \sqrt{\frac{1}{2}f_2}\mathds{1}_{\{x < 0\}}$. Right: source function $f_1$ and scaled source function $\tilde{f}_1$ matching target function $f_2$.}
    \label{fig:illustration_2}\vspace{3em}
\end{figure}

\subsection{Experiment: Synthetic Downstream (additional details)}\label{sec:appdx_exp_synth_downstream_ext}

Here we provide additional details on the experiment from~\cref{sec:syn_regression} with minor repetitions.
As discussed earlier for this experiment we follow the steps: (1) create a synthetic regression dataset, (2) fit a probabilistic model on this data following the procedure from~\cref{alg:predictive_train}, and then do inference according to~\cref{alg:predictive_inference} to obtain $\hat{P}_Y$, (3) instead of a real downstream decision making task, we adopt a weighted scoring rule, namely, the threshold-weighted CRPS ($twCRPS$) and generate target downstream scores $s^d$ based on a known chaining function $\nu$, (4) use the procedure from~\cref{alg:alignment_train} to train the alignment model to learn the weight function $\nu$ and obtain the estimated downstream score $\hat{s}^d$. 
To the alignment model the weight function is unknown, but this setup allows us to have a ground truth for the weight function which can be seen as yet another sanity check for the alignment pipeline.\\

\subsubsection{Synthetic data generation}
We consider two different functions for synthetic data generating process. Equation~\ref{eq:synth_data_1} corresponds to a sinusoidal periodic function while Eq.~\ref{eq:synth_data_2} corresponds to a quadratic form. Both data generating functions allow for a noise term that could be heterogeneous when $n=1$ or homogeneous when $n\neq1$.

\begin{equation} \label{eq:synth_data_1}
    \begin{aligned}
        f(x, n) &= \frac{1}{2} x + sin(x) + \frac{1}{4} (x\cdot \mathds{1}_{n=1} + \mathds{1}_{n\neq 1}) \varepsilon; \\
        &\quad \text{where}\quad  x \sim \mathcal{U}(-10,10) \quad\text{and}\quad
        \varepsilon \sim \mathcal{N}(0, 1)
    \end{aligned}
\end{equation}

\begin{equation} \label{eq:synth_data_2}
    \begin{aligned}
        f(x, n) &= 2x^2 +0 (\frac{1}{2} x^2\cdot \mathds{1}_{n=1} + \mathds{1}_{n\neq 1}) \varepsilon; \\
        &\quad \text{where}\quad  x \sim \mathcal{U}(0,5) \quad\text{and}\quad
        \varepsilon \sim \mathcal{N}(0, 1)
    \end{aligned}
\end{equation}

Next, we use this data to train a probabilistic regression model.

\subsubsection{Distribution-free Regression Model}
The probabilistic model created here is an arbitrary choice. In principle, it can be replaced with any other probabilistic model. To estimate the conditional distribution of the target variable $y=f(x,n)$ given the input features $x$, we implement a neural network that parameterizes a Gaussian distribution. Specifically, the model learns to predict both the mean and variance of the conditional distribution $\hat{P}_Y = \mathcal{N}(\mu, \sigma^2 | x)$. The network consists of two fully connected hidden layers with ReLU activations:
\begin{equation}
    h_1 = \text{ReLU}(W_1 x + b_1), \quad
    h_2 = \text{ReLU}(W_2 h_1 + b_2).
\end{equation}
From this hidden representation, the model outputs the mean $\mu(x)$ and log-variance $\log \sigma^2(x)$:
\begin{equation}
    \mu(x) = W_{\mu} h_2 + b_{\mu}, \quad 
    \log \sigma^2(x) = W_{\sigma} h_2 + b_{\sigma}.
\end{equation}

The model is trained using the CRPS as the loss function, which measures the quality of probabilistic predictions by comparing the predicted distribution to observed values. Given $N$ training examples $\{(x_i, y_i)\}_{i=1}^N$, the CRPS loss is computed as:
\begin{equation}
    \text{CRPS}(\mu_i, \sigma_i; y_i) = \mathbb{E}_{Y' \sim \mathcal{N}(\mu_i, \sigma_i^2|x)} \left[ |Y' - y_i| - \frac{1}{2} |Y' - Y''| \right],
\end{equation}
where $Y'$ and $Y''$ are independent samples from the predicted distribution. 
In order to obtain samples $Y'$ from the network, the reparameterization trick for Gaussian distribution is used:
\begin{equation}
    Y' = \mu + \sigma\epsilon, \quad \epsilon \sim N(0,1),
\end{equation}
where $\mu = \mu(x)$ and $\sigma = \exp{(\frac{1}{2} \log \sigma^2(x))}$.
Following the procedure from~\cref{alg:predictive_train} we train the model. For training, we use the Adam optimizer with a learning rate of $10^{-2}$ and weight decay of $10^{-4}$. The dataset is split into training (80\%), validation (10\%), and test (10\%) sets. The model is trained for $100$ epochs, and its predictive performance is evaluated on test data following~\cref{alg:predictive_inference} using kernel form of CRPS from~\cref{eq:crps}:

\begin{equation*}
    \begin{aligned}
    \text{CRPS}(\hat{P}_{Y_i}, y_i) &= \E_{Y' \sim \hat{P}_{Y_i}} |Y'_i - y_i| - \frac{1}{2} \E_{Y' \sim \hat{P}_{Y_i}} |Y'_i - Y''_i|,
    \end{aligned}
\end{equation*}

which is estimated as follows:

\begin{equation*}
    \begin{aligned}
        s^u_i = \widehat{CRPS}(\hat{P}_{Y_i}, y_i) = \frac{1}{M}\sum_{j=1}^M |\hat{\mathbf{y}}^{(j)}_i-y_i| - \frac{1}{2M^2} \sum_j\sum_k |\hat{\mathbf{y}}^{(j)}_i - \hat{\mathbf{y}}^{(k)}_i|
    \end{aligned},
\end{equation*}

where $\hat{\mathbf{y}}_i = \{\hat{y}^{(j)}_i \sim Y'\}_{j=1}^{M}$.

\begin{figure}[htbp]
    \centering
    \begin{subfigure}[b]{0.24\textwidth}
        \centering
        \includegraphics[width=\textwidth]{./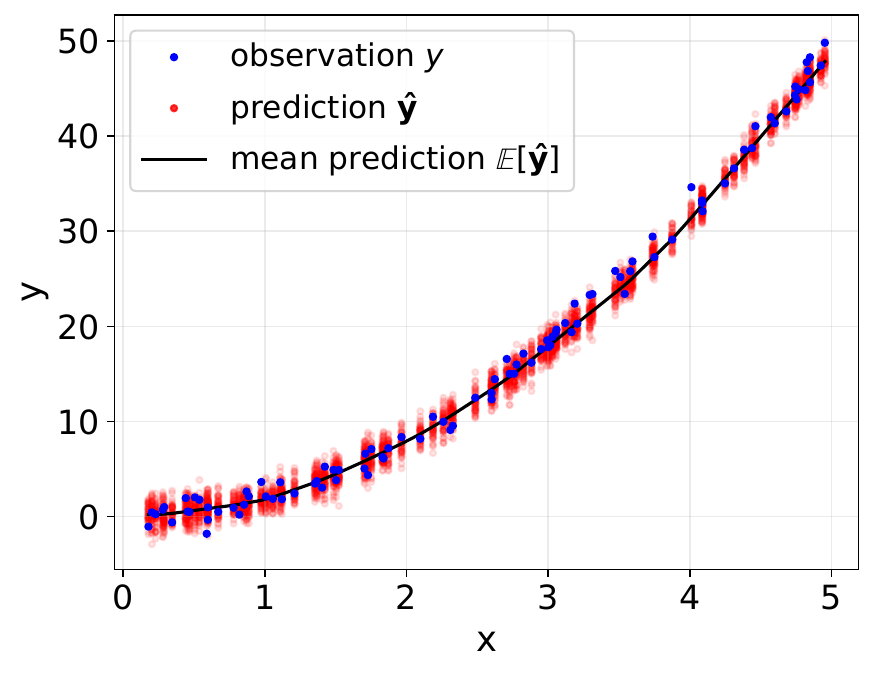} % Replace with your image file
        \caption{}
        \label{fig:synth_data_and_pred_sub1}
    \end{subfigure}
    \begin{subfigure}[b]{0.24\textwidth}
        \centering
        \includegraphics[width=\textwidth]{./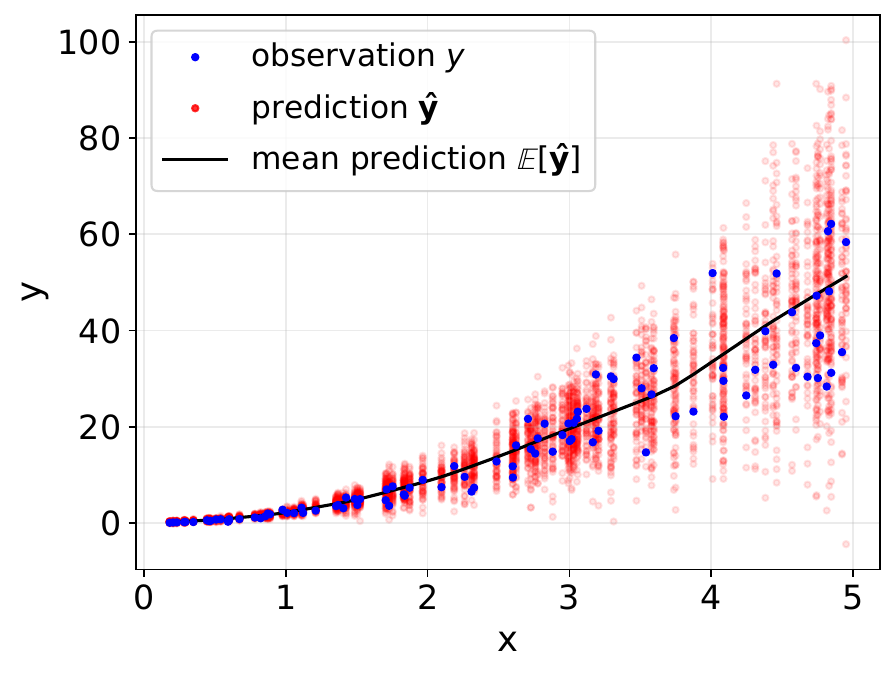} % Replace with your image file
        \caption{}
        \label{fig:synth_data_and_pred_sub2}
    \end{subfigure}
    \begin{subfigure}[b]{0.24\textwidth}
        \centering
        \includegraphics[width=\textwidth]{./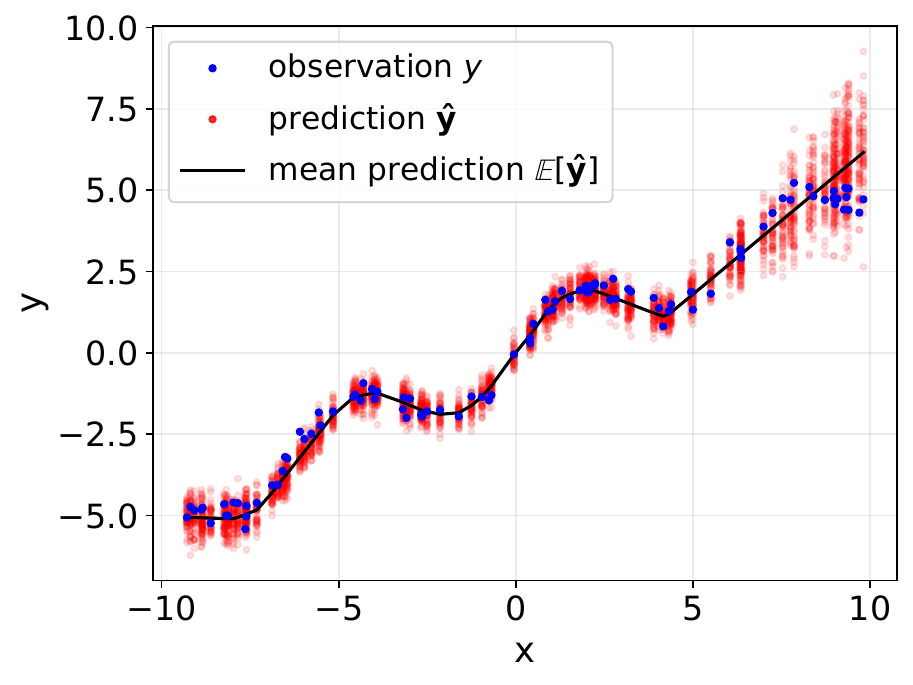} % Replace with your image file
        \caption{}
        \label{fig:synth_data_and_pred_sub3}
    \end{subfigure}
    \begin{subfigure}[b]{0.24\textwidth}
        \centering
        \includegraphics[width=\textwidth]{./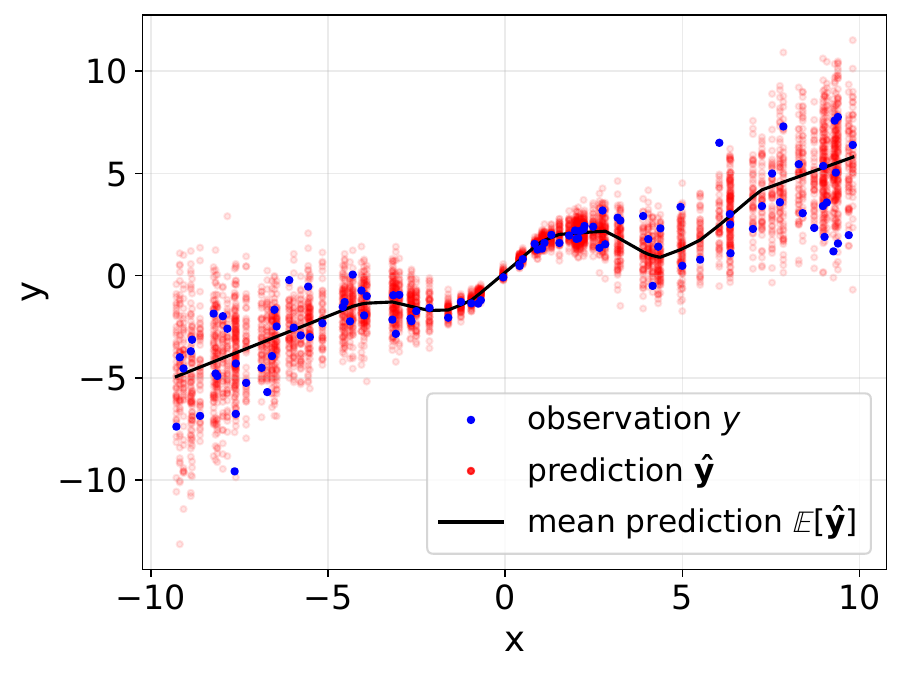}
        \caption{}
        \label{fig:synth_data_and_pred_sub4}
    \end{subfigure}
    \vspace{1em}
    \caption{Synthetic regression data together with the predictions made on test set.}\vspace{2em}
    \label{fig:synth_data_and_pred}
\end{figure}

~\Cref{fig:synth_data_and_pred} depicts the data generated from quadratic and sinusoidal functions from~\cref{eq:synth_data_2,eq:synth_data_1}, respectively.~\cref{fig:synth_data_and_pred_sub1,fig:synth_data_and_pred_sub3} correspond to homogenous noise and~\cref{fig:synth_data_and_pred_sub2,fig:synth_data_and_pred_sub4} to heterogeneous one.

\subsubsection{Synthetic downstream score generation}

Instead of having an actual downstream task, we adopt kernel form of $twCRPS$ from~\cref{eq:twcrps} as the downstream evaluation function and generate synthetic downstream scores with a given chaining function $\nu$: 
\begin{equation*} 
    \begin{aligned}
         twCRPS(\hat{P}_{Y_i}, y_i; \nu) &= \E_{Y' \sim \hat{P}_{Y_i}} |\nu(Y'_i) - \nu(y_i)| - \frac{1}{2} \E_{Y' \sim \hat{P}_{Y_i}} |\nu(Y'_i) - \nu(Y''_i)|,
    \end{aligned}
\end{equation*}

which is estimated as follows,

\begin{equation*}
    \begin{aligned}
        s^d_i &= \widehat{twCRPS}(\hat{P}_{Y_i}, y_i; \nu) \\
              &= \frac{1}{M}\sum_{j=1}^M |\nu(\hat{\mathbf{y}}^{(j)}_i)-\nu(y_i)| - \frac{1}{2M^2} \sum_j\sum_k |\nu(\hat{\mathbf{y}}^{(j)}_i) - \nu(\hat{\mathbf{y}}^{(k)}_i)|,
    \end{aligned}
\end{equation*}

where $\hat{\mathbf{y}}_i = \{\hat{y}^{(j)}_i \sim Y'\}_{j=1}^{M}$.

For chaining function $\nu$, we consider different functions as reported in~\cref{tab:chaining_table} (depicted below again for easier reference). Chaining functions I-III are based on~\cite{allen2023evaluating} while we devised chaining function IV as a more complex case for the chaining function. \\

\begin{table*}[ht]
    \centering
    \begin{tabular}{ccl}
         & Name & Chaining Function \\\hline
        1 & Threshold & $\nu(z;t) = max(z, t)$ \\
        2 & Interval & $\nu(z;a,b) = min(max(z, a), b); a < b$ \\
        3 & Gaussian & $\nu(z;t,\mu,\sigma) = (z-t)  \Phi_{\mu, \sigma}(z) + \sigma^2 \phi_{\mu, \sigma}(z)$ \\
        4 & SumSigmoids & $\nu(z;\mathbf{a},\mathbf{b},\mathbf{c},\mathbf{d}) = \sum_i \frac{c_i}{1+\exp{(-(a_iz + b_i))}+d_i}$ \\
        % 5 & BetaFunction & $\nu(z;a,b,m) = \dots $ \\
    \end{tabular}
    % \caption{List of chaining functions considered for the synthetic experiment.}
    % \label{tab:chaining_table_appendix}
\end{table*}

Examples of each chaining function type is illustrated in~\cref{fig:synth_chaining}. A threshold function with $t=0.5$, an interval function with $a=-0.5, b=1.5$, a Guassian function with $t=0.5, \mu=0, \sigma=1$, and Sum of Sigmoid with $\mathbf{a}=(0,5,1,2), \mathbf{b}=(2,10,20,-1), \mathbf{c}=(1,4,2,5), \mathbf{d}=(0,0,0,0)$ are depicted in~\crefrange{fig:synth_chaining_sub1}{fig:synth_chaining_sub4}, respectively.

\begin{figure}[htbp]
    \centering
    \begin{subfigure}[b]{0.24\textwidth}
        \centering
        \includegraphics[width=\textwidth]{./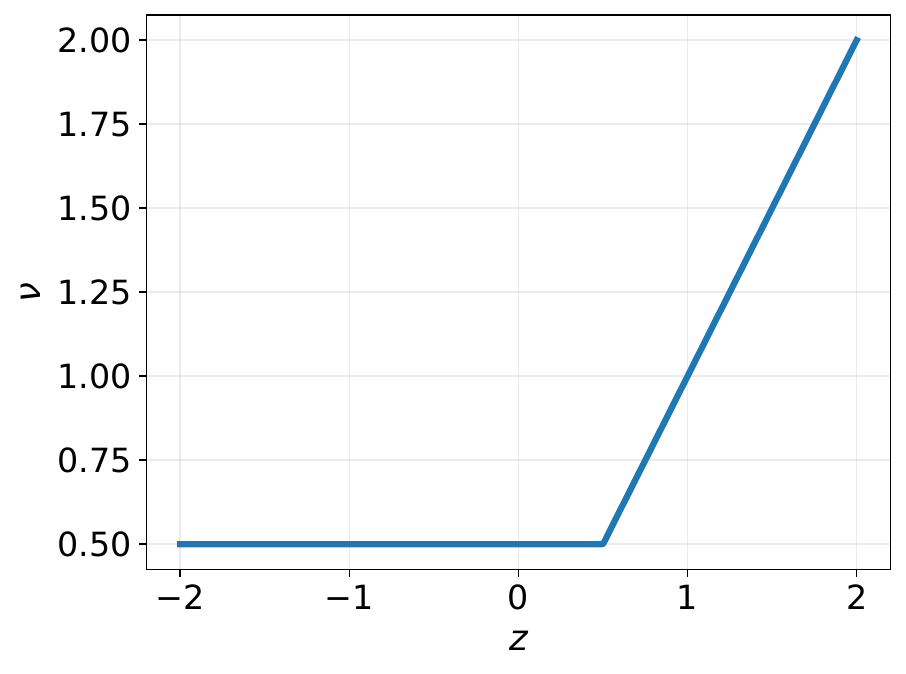} 
        \caption{Threshold}
        \label{fig:synth_chaining_sub1}
    \end{subfigure}
    \begin{subfigure}[b]{0.24\textwidth}
        \centering
        \includegraphics[width=\textwidth]{./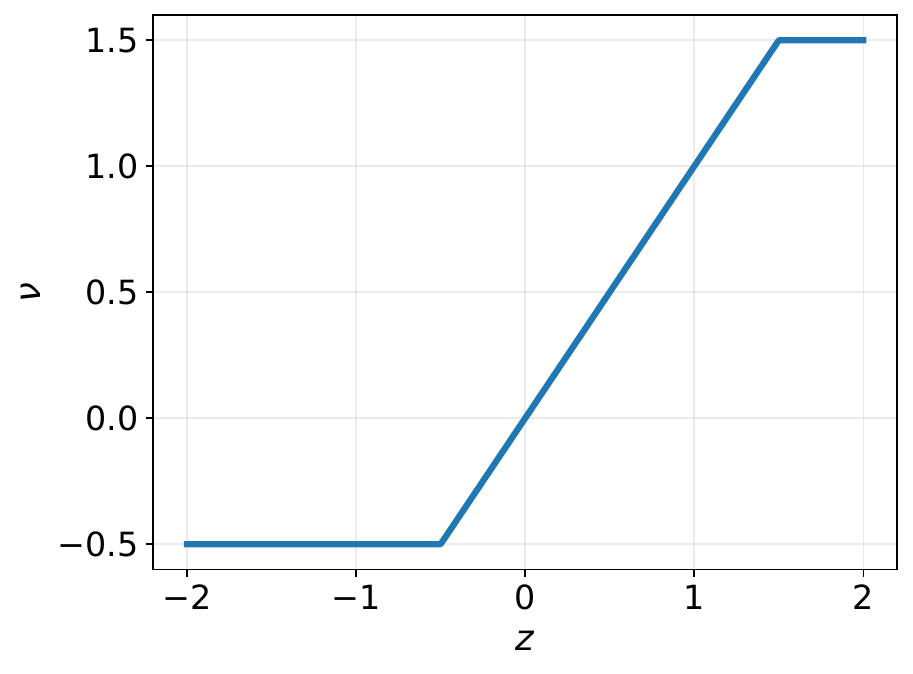} % 
        \caption{Interval}
        \label{fig:synth_chaining_sub2}
    \end{subfigure}
    \begin{subfigure}[b]{0.24\textwidth}
        \centering
        \includegraphics[width=\textwidth]{./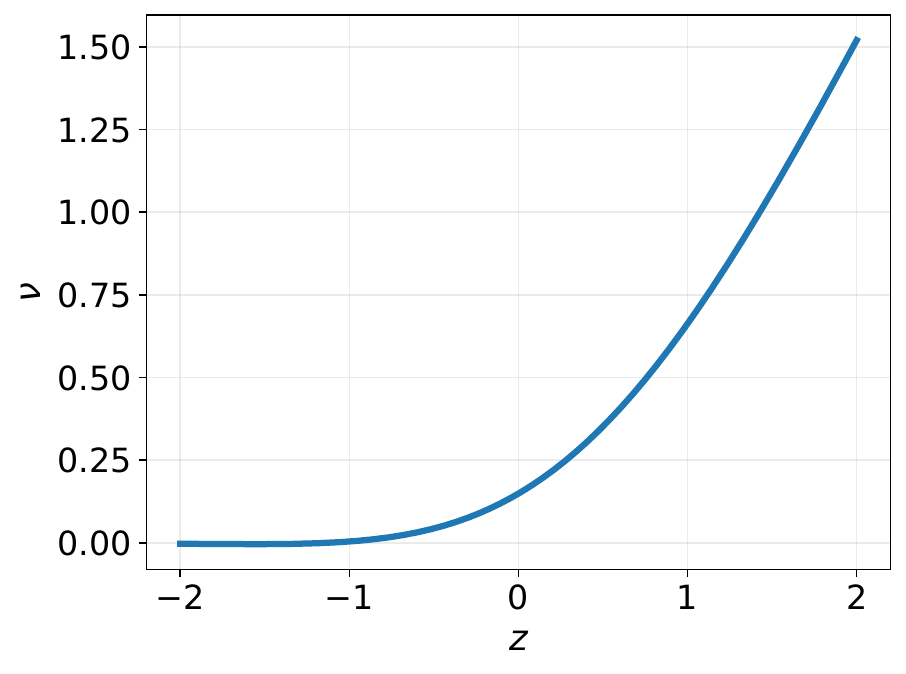} % 
        \caption{Gaussian}
        \label{fig:synth_chaining_sub3}
    \end{subfigure}
    \begin{subfigure}[b]{0.24\textwidth}
        \centering
        \includegraphics[width=0.97\textwidth]{./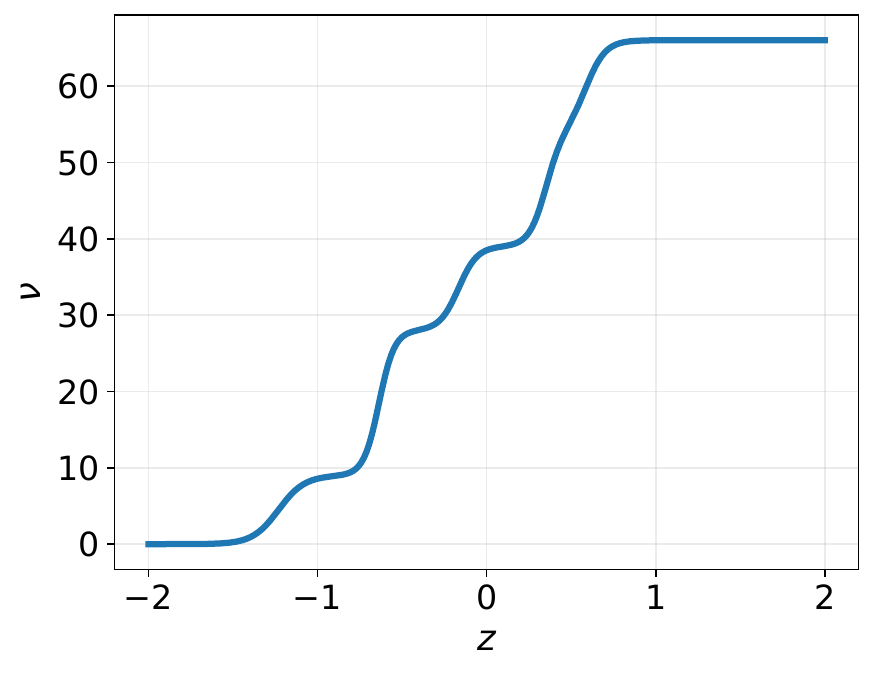}
        \caption{SumSigmoids}
        \label{fig:synth_chaining_sub4}
    \end{subfigure}
    \vspace{1em}
    \caption{Chaining functions used to generate synthetic downstream scores $s^d$}\vspace{2em}
    \label{fig:synth_chaining}
\end{figure}

\subsubsection{Alignment and its evaluation}

We use the parametrized kernel form of twCRPS from~\cref{eq:twcrps} using a neural network $f$ with parameters $\bm{\theta}$ according to~\cref{eq:parameterized_tw_scoringrule}:

\begin{equation*}
    \begin{aligned}
        \hat{s}^d_i &= f(\hat{\mathbf{y}}_i, y_i, twCRPS; \bm{\theta}) \\
                    &= h(twCRPS(\hat{\mathbf{y}}_i, y_i; \nu(.;\theta_1)); \theta_2)
    \end{aligned}
\end{equation*}

which is estimated as follows,
\begin{equation*}
    \begin{aligned}
        \hat{s}^d_i &= h( \Big[\frac{1}{M}\sum_{j=1}^M |\nu(\hat{\mathbf{y}}^{(j)}_i, \theta_1)-\nu(y_i, \theta_1)| - \frac{1}{2M^2} \sum_j\sum_k |\nu(\hat{\mathbf{y}}^{(j)}_i, \theta_1) - \nu(\hat{\mathbf{y}}^{(k)}_i, \theta_1)|\Big]; \theta_2).
    \end{aligned}
\end{equation*}

Where $h(x;\theta_2) = wx+b; w > 0$. We follow~\cref{alg:alignment_train} to obtain $\hat{s}^d_i$. The results of the alignment based on sinusoidal data with heterogeneous noise for different chaining functions on the test set are shown in~\cref{fig:res_synth_reg}.

\subsubsection{Hyperparameters}\label{sec:hyperparams}

In our work, we primarily considered CRPS as the upstream evaluation metric. There are relatively few widely used (strictly) proper scoring rules for univariate regression. We chose threshold-weighted CRPS for its broad applicability and compatibility with sample-based, non-parametric distributional predictions. Alternatives like pinball loss and interval scores apply only to specific quantiles and, when averaged over many quantiles, effectively approximate CRPS. The log score requires density access, which is often impractical for sample-based settings. The recently proposed Inverse Multiquadric Score (IMS)~\cite{allen2023evaluating} is a promising option but is not yet widely adopted.

While we did not conduct an automated hyperparameter search, we empirically explored different architectural variations. The architecture was designed on synthetic datasets and then applied to the real data experiments while ensuring the test set remained strictly held out. Specifically, we tested up to three monotonic layers, varying neuron counts up to 100, weight decay with varying strength, and activation functions (e.g., ReLU, GELU). Ultimately, as depicted in Fig.~\ref{fig:architecture}, input-transformation $\nu$ with one hidden layer of size $n=50$ and output-transformation $h$ with two layers of size $n=10$, weight decay of $1e-5$, learning rate of $0.04$, and epoch of 100 proved to be sufficient across all cases. ReLu was chosen as activation function across all experiments except for the synthetic experiment with Sum of Sigmoids chaining function where it was GELU as depicted on the last row of~\cref{fig:res_synth_reg}.  

\begin{figure}[htbp]
    \centering
    \begin{subfigure}[b]{0.23\textwidth}
        \centering
        \includegraphics[width=\textwidth]{./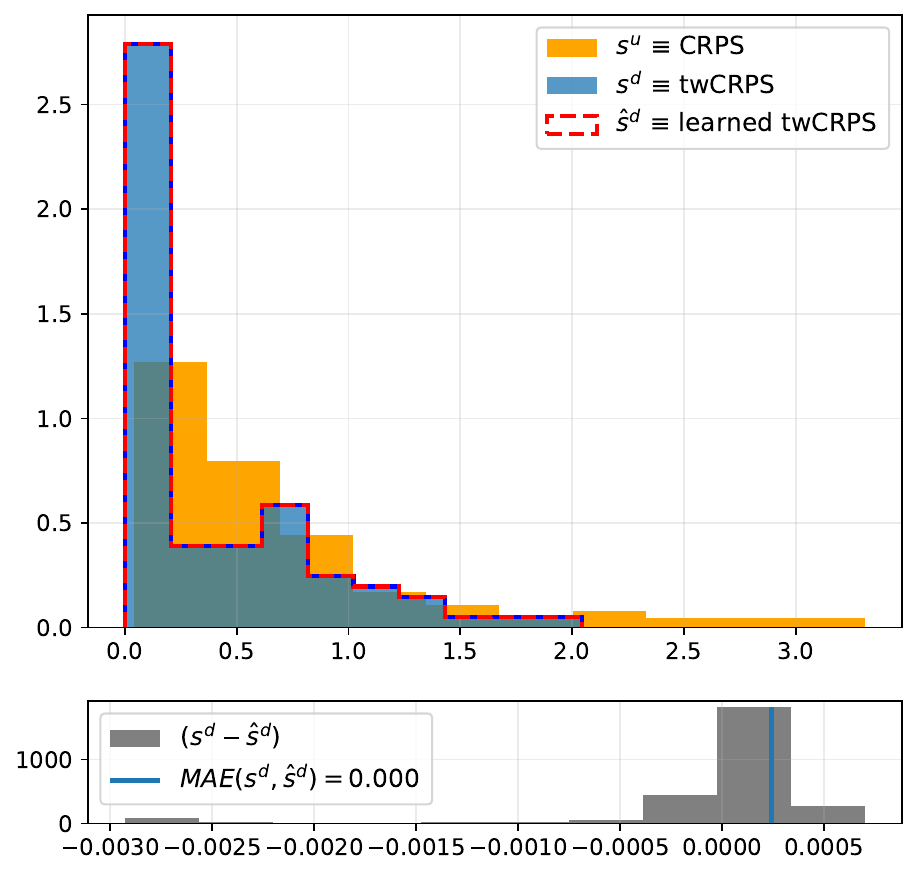} 
    \end{subfigure}
    \begin{subfigure}[b]{0.225\textwidth}
        \centering
        \includegraphics[width=\textwidth]{./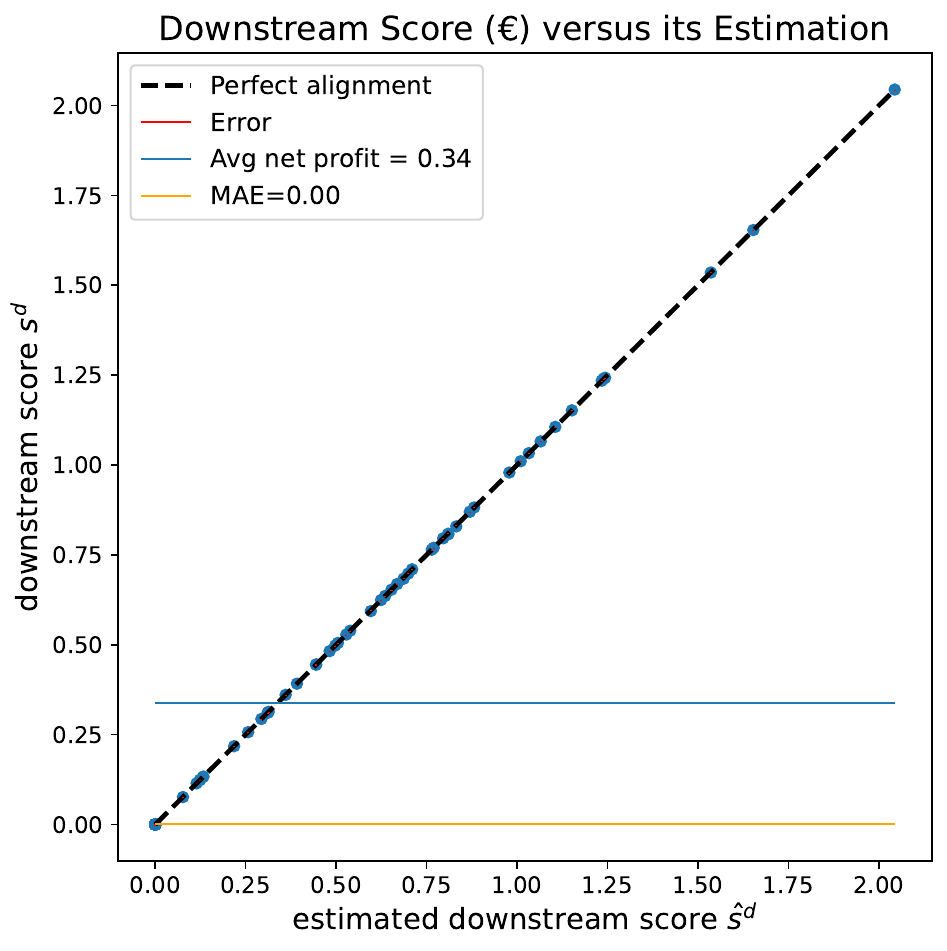} 
    \end{subfigure}
    \begin{subfigure}[b]{0.23\textwidth}
        \centering
        \includegraphics[width=\textwidth]{./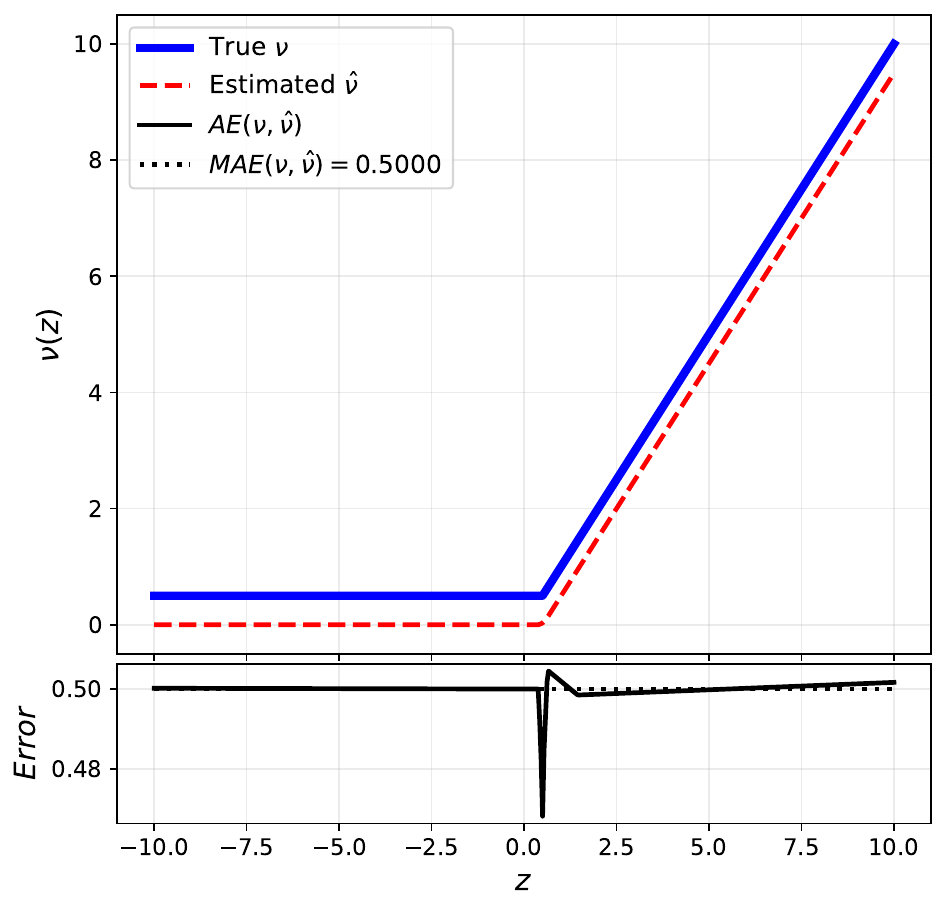} % 
    \end{subfigure}
    \begin{subfigure}[b]{0.23\textwidth}
        \centering
        \includegraphics[width=\textwidth]{./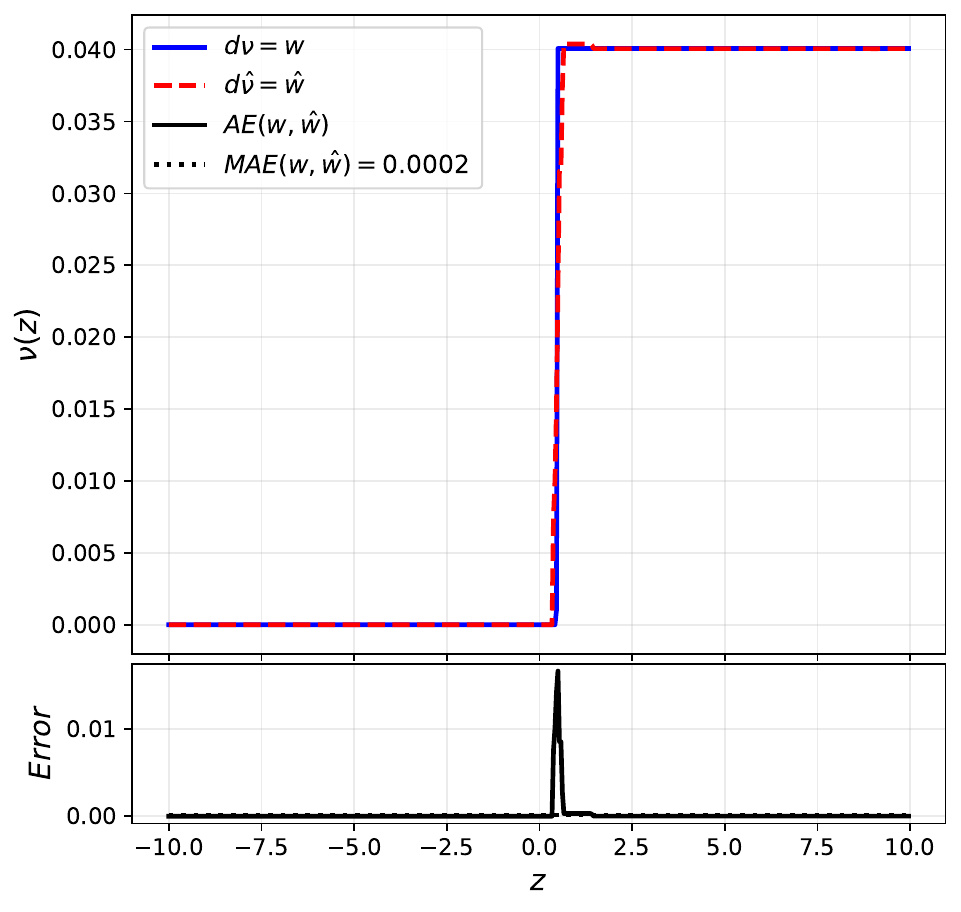} % 
    \end{subfigure}

    \hfill

    \begin{subfigure}[b]{0.23\textwidth}
        \centering
        \includegraphics[width=\textwidth]{./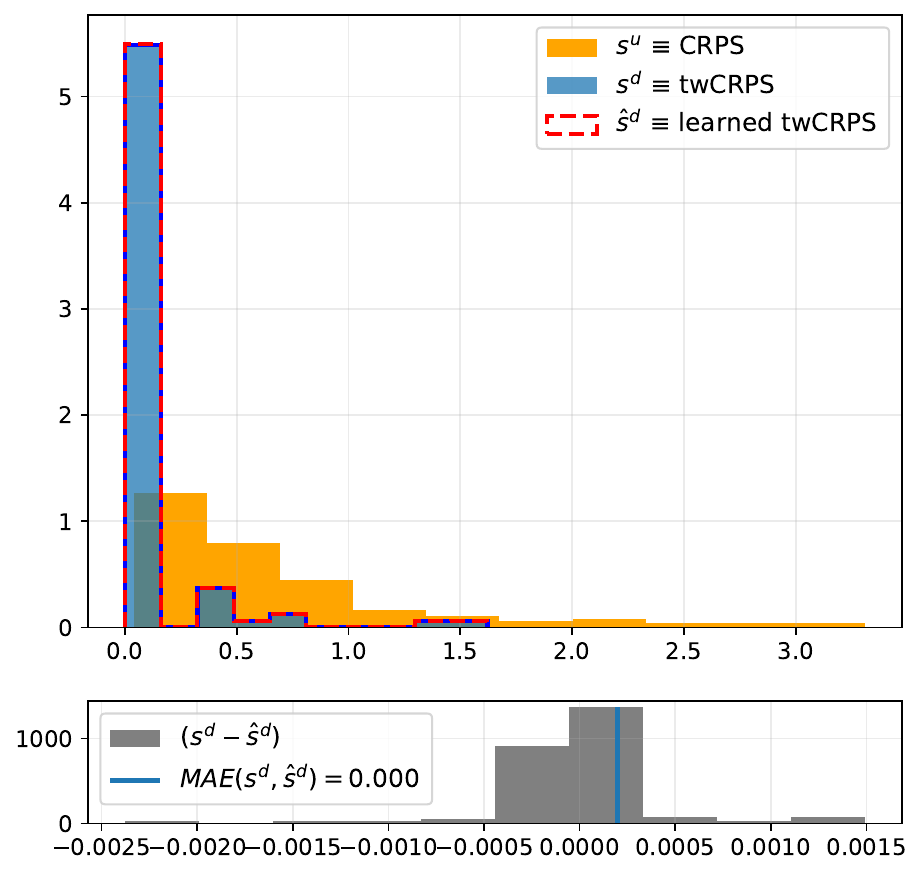} 
    \end{subfigure}
    \begin{subfigure}[b]{0.225\textwidth}
        \centering
        \includegraphics[width=\textwidth]{./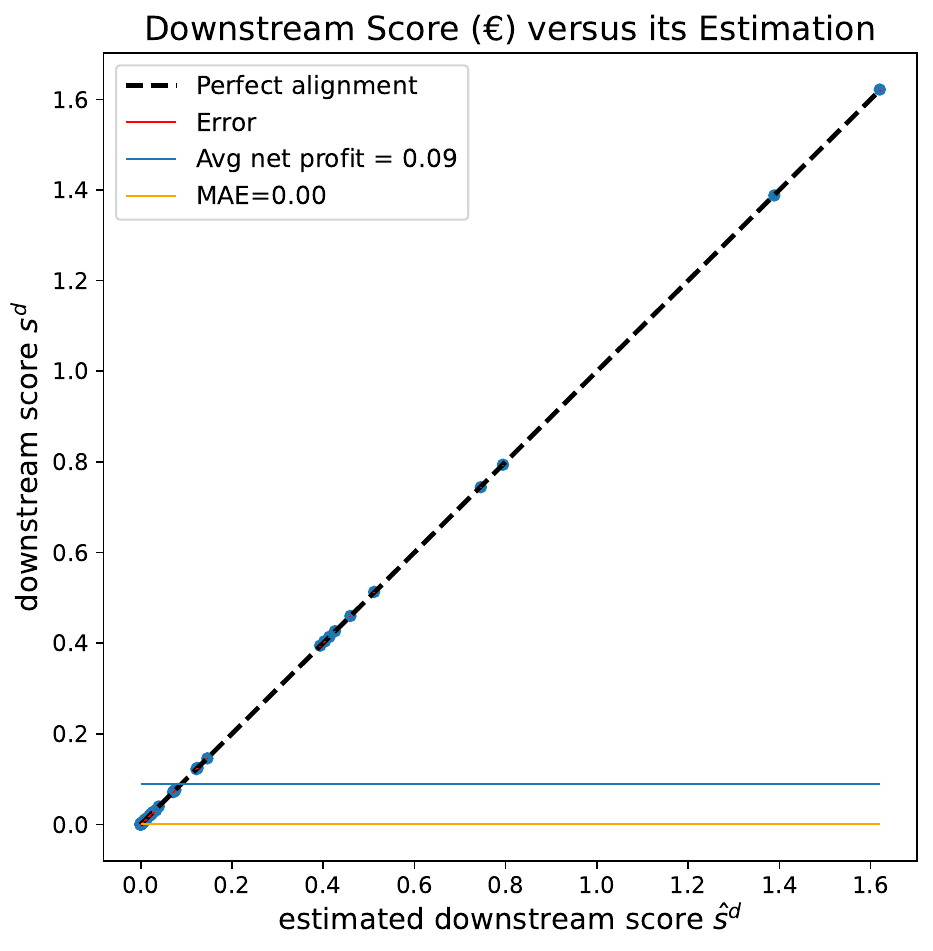} 
    \end{subfigure}
    \begin{subfigure}[b]{0.23\textwidth}
        \centering
        \includegraphics[width=\textwidth]{./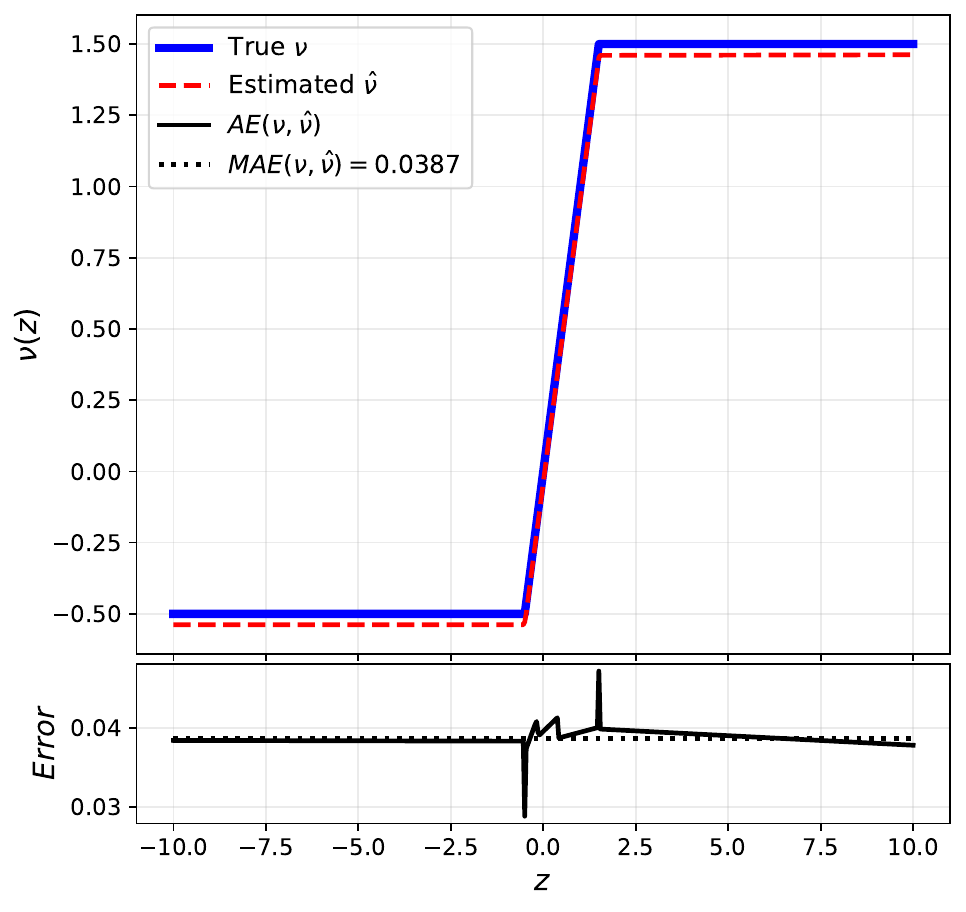} % 
    \end{subfigure}
    \begin{subfigure}[b]{0.23\textwidth}
        \centering
        \includegraphics[width=\textwidth]{./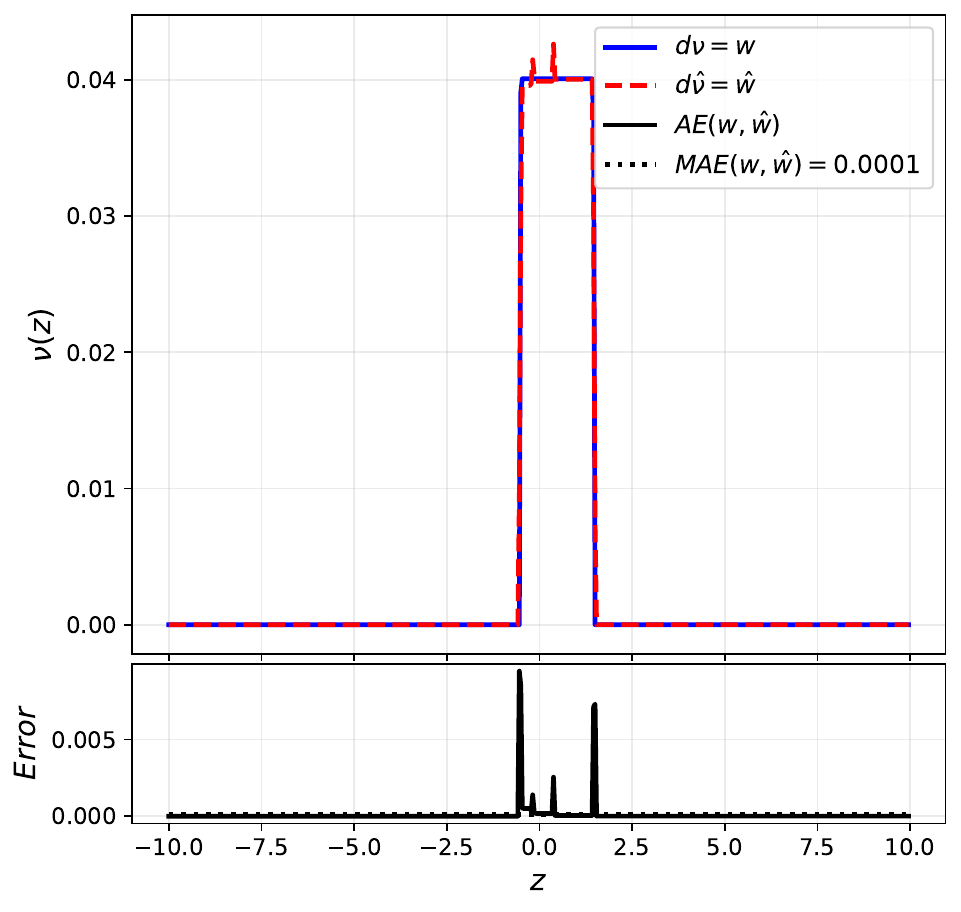} % 
    \end{subfigure}

    \hfill

    \begin{subfigure}[b]{0.23\textwidth}
        \centering
        \includegraphics[width=\textwidth]{./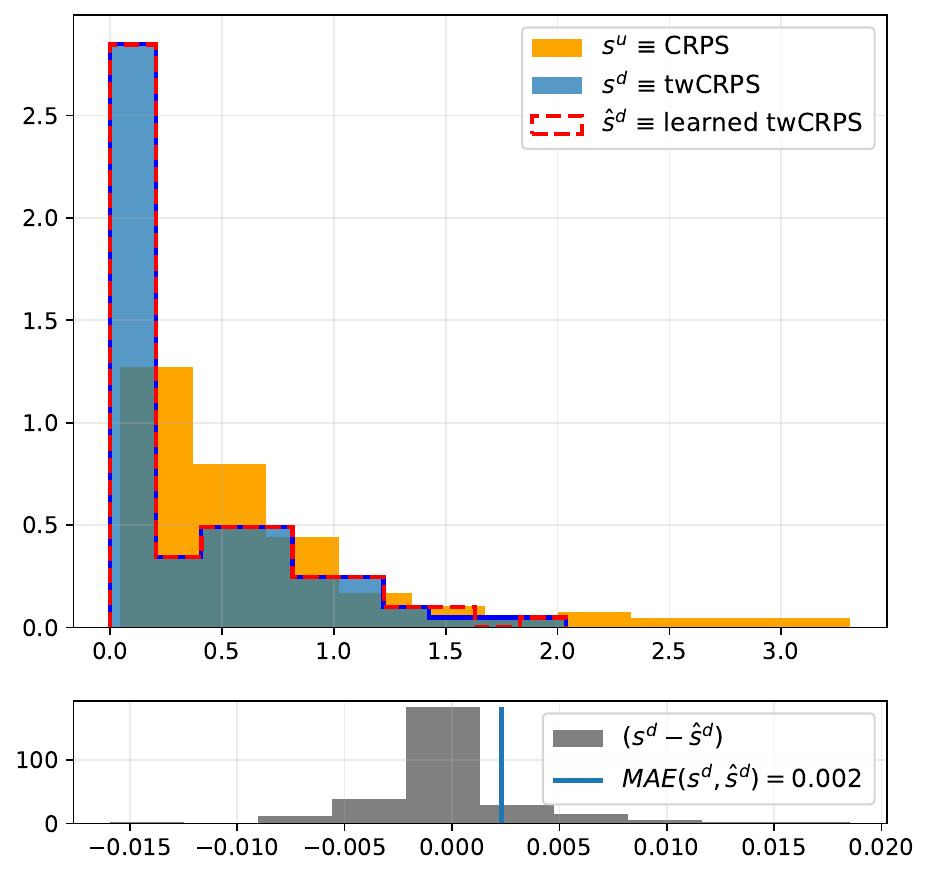} 
    \end{subfigure}
        \begin{subfigure}[b]{0.225\textwidth}
        \centering
        \includegraphics[width=\textwidth]{./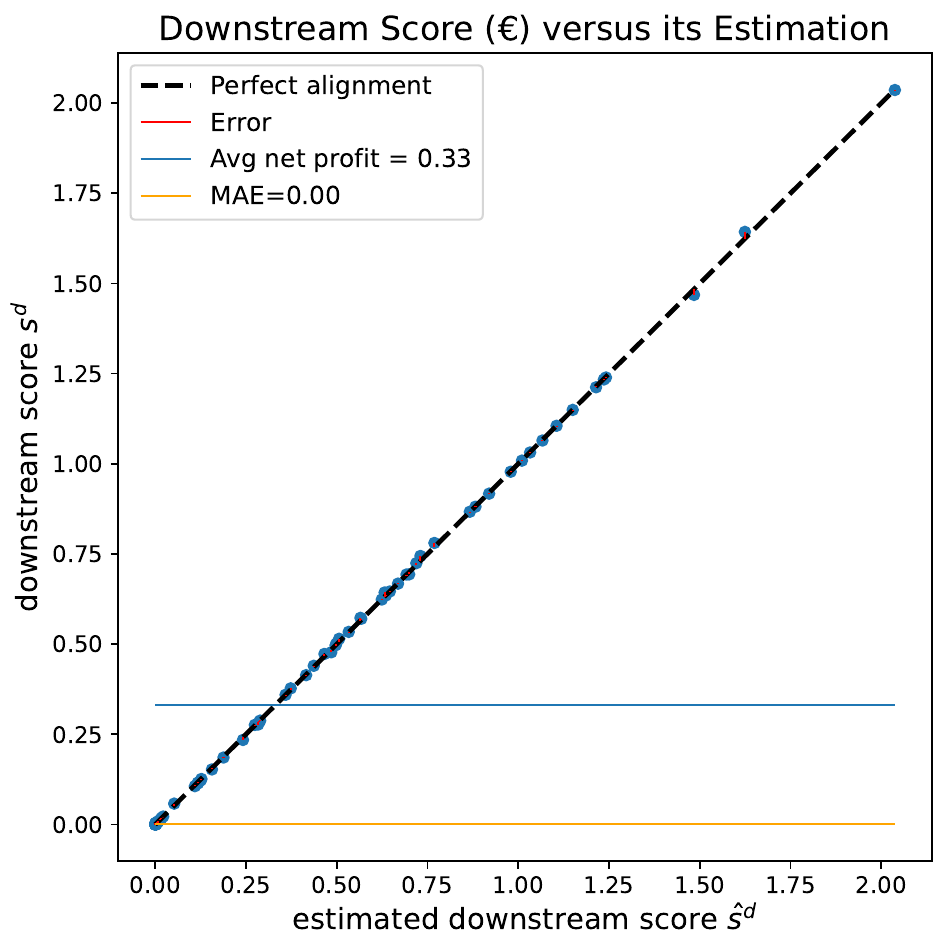} 
    \end{subfigure}
    \begin{subfigure}[b]{0.23\textwidth}
        \centering
        \includegraphics[width=\textwidth]{./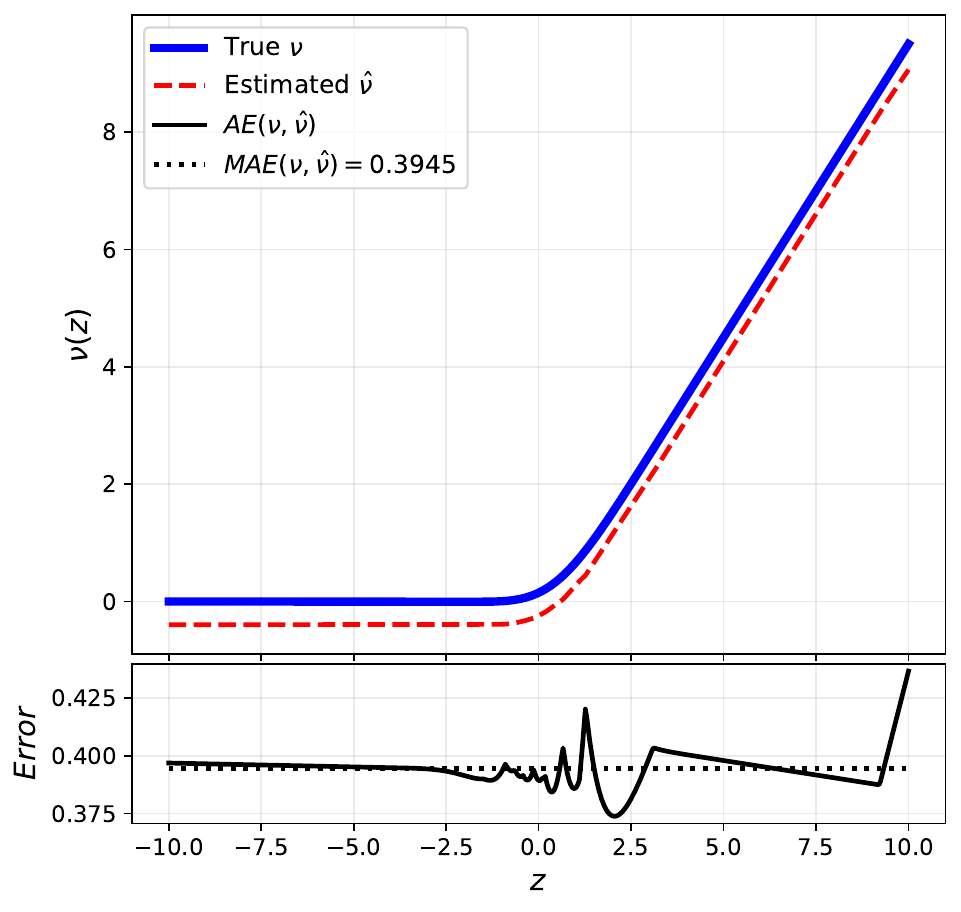} % 
    \end{subfigure}
    \begin{subfigure}[b]{0.23\textwidth}
        \centering
        \includegraphics[width=\textwidth]{./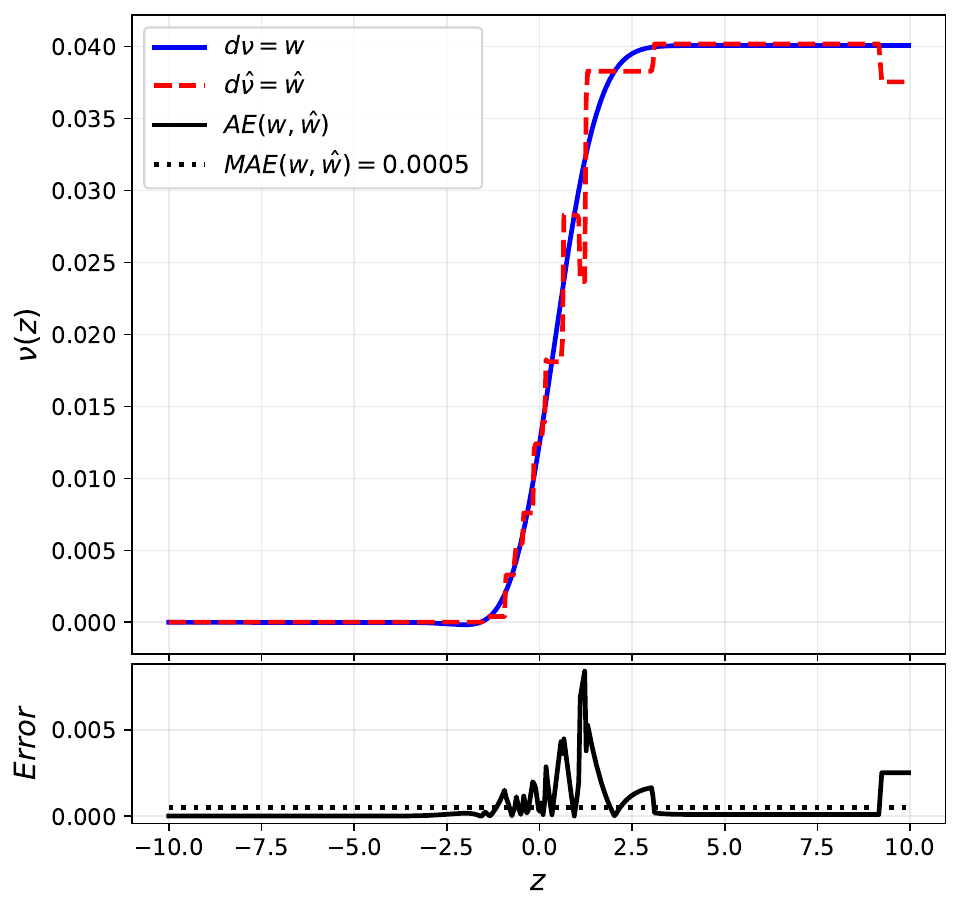} % 
    \end{subfigure}

    \hfill

    \begin{subfigure}[b]{0.23\textwidth}
        \centering
        \includegraphics[width=\textwidth]{./data-sinusoidal_noise-hetero_result_scores_and_error_distributions_ctype_sumsigmoids_test.pdf} 
        \caption{}
    \end{subfigure}
    \begin{subfigure}[b]{0.225\textwidth}
        \centering
        \includegraphics[width=\textwidth]{./data-sinusoidal_noise-hetero_result_scores_alignment_curve_ctype_sumsigmoids_test.pdf} 
        \caption{}
    \end{subfigure}
    \begin{subfigure}[b]{0.23\textwidth}
        \centering
        \includegraphics[width=\textwidth]{./data-sinusoidal_noise-hetero_result_chaining_and_error_ctype_sumsigmoids_test.pdf} % 
        \caption{}
    \end{subfigure}
    \begin{subfigure}[b]{0.23\textwidth}
        \centering
        \includegraphics[width=\textwidth]{./data-sinusoidal_noise-hetero_result_weighting_and_error_ctype_sumsigmoids_test.pdf} % 
        \caption{}
    \end{subfigure}
    \vspace{1em}
    \caption{Results obtained from threshold, interval, Gaussian, and SumofSigmoids chaining from top to bottom, respectively. (a) distribution of scores, (b) alignment curves, (c,d) learned chaining and weighting functions with their corresponding error to the true functions.}
    \label{fig:res_synth_reg}
\end{figure}

\newpage
\subsection{Experiment: Inventory Optimization with Real Data (additional details)}\label{sec:appdx_exp_downstream_realdata}

The dataset consists of 168 months, of which we dedicate 144 months to training and 24 months to the test range as depicted in~\cref{fig:realdata_train_test_split} Due to the time-series nature of the dataset, we obtain the prediction for the validation set by doing backtesting over the train data with a 1-month forecast horizon and strides of length one. This provides us with 120 months within the validation set. Thus, the alignment set consists of 120 instances, and the test set for alignment consists of 24 months. \Cref{fig:downstream_realdata_subset2_example} depicts the predictions made on the validation set. The probabilistic demand predictions are issued using the Exponential Smoothing model from the Darts library~\cite{JMLR:v23:21-1177}. The alignment model architecture and hyperparameters were the same as described under~\cref{sec:hyperparams} except for the number of epochs that was set to 1000 for this experiment. 

\begin{figure}[H]
    \centering
    \includegraphics[width=0.75\linewidth]{./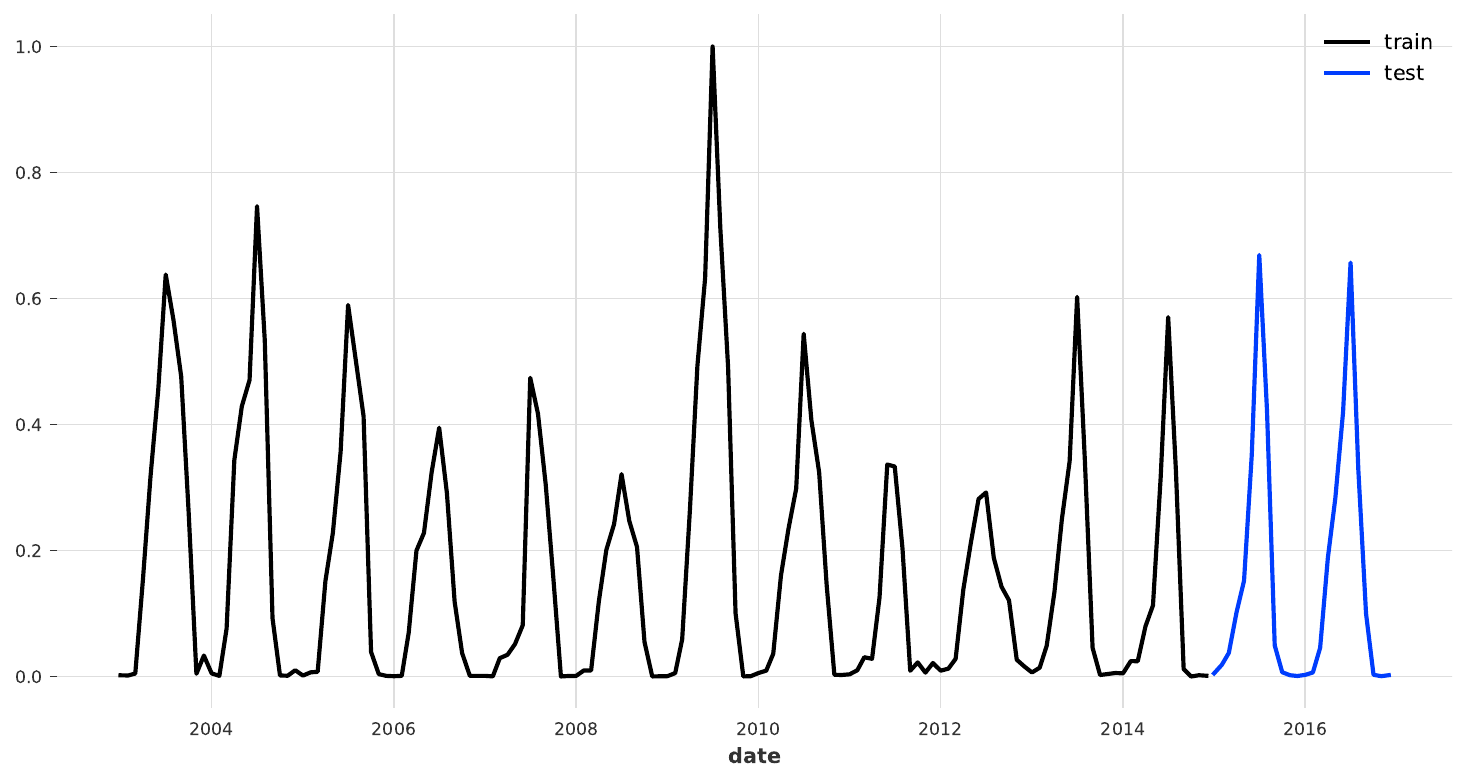}
    \caption{Example of real demand data used for building a probabilistic model.}
    \label{fig:realdata_train_test_split}
\end{figure}

\begin{figure}[H]
    \centering
    \includegraphics[width=0.75\linewidth]{./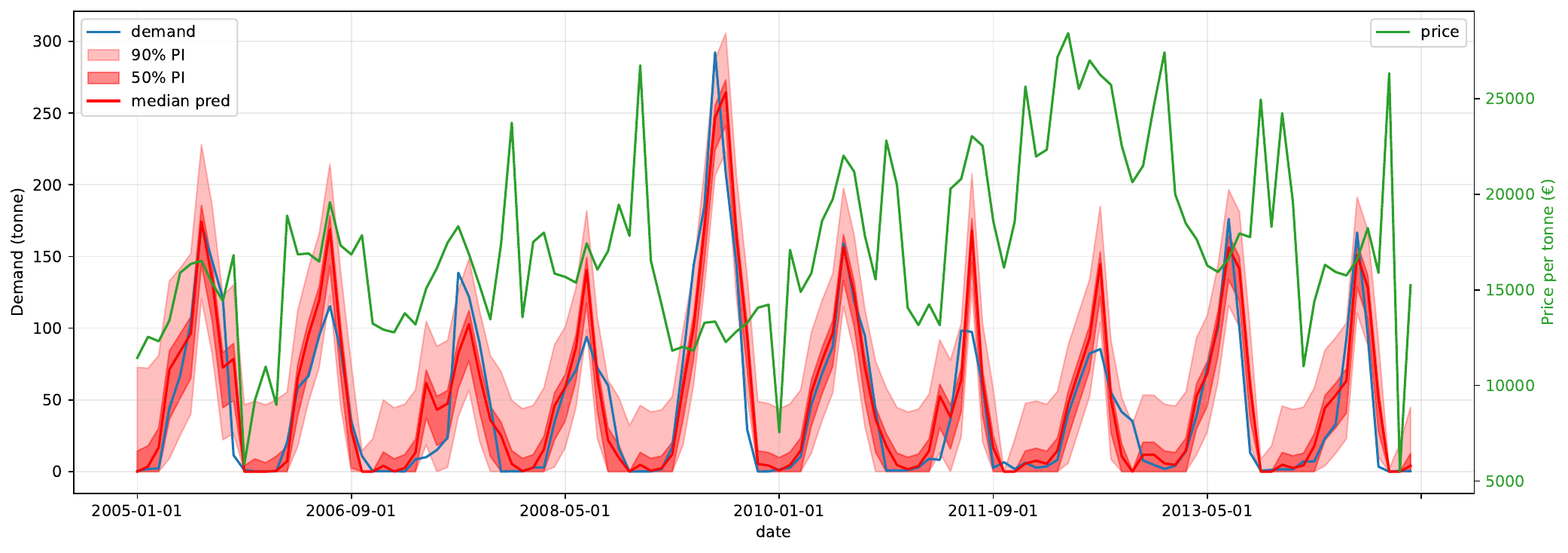}
    \caption{Example of demand and price data together with the probabilistic predictions made on the validation set used for the alignment training.}
    \label{fig:downstream_realdata_subset2_example}
\end{figure}

Following the profit objective introduced in~\cref{sec:realdata_experiment} we plug in the prices for $p_t$ where $t$ is the time period for the month and set $c_t=p_t*2.5$, $h_t=7000$. All price units are in euros.

\end{document}
%%%%%%%%%%%%%%%%%%%%%%%%%%%%%%%%%%%%%%%%%%%%%%%%%%%%%%%%%%%%%%%%%%%%%%